\newcommand{\shrink}[1]{\vspace*{-#1}}
\newcommand{\fix}[1]{#1}
\newcommand{\new}[1]{#1}
\newcommand{\iclr}[1]{#1}
\newcommand{\arxiv}[1]{}
\newcommand\sref{\S\ref}
\newcommand\eref{Eq.~\ref}
\newcommand\fref{Fig.~\ref}
\newcommand\tref{Tab.~\ref}
\newtheorem{theorem}{Theorem}
\newtheorem{corollary}{Corollary}
\newtheorem{definition}{Definition}
\newtheorem{proposition}{Proposition}
\newtheorem{example}{Example}
\newtheorem{fact}{Fact}
\title{Neural Spectral Methods:\\ \Large{Self-supervised learning in the spectral domain}}
\author{ \textbf{Yiheng Du} \quad 
\textbf{Nithin Chalapathi} \quad 
\textbf{Aditi S. Krishnapriyan} \\[0.02in]
 \texttt{\{yihengdu, nithinc, aditik1\}@berkeley.edu} \\[0.1 in]  University of California, Berkeley}
\begin{document}

\maketitle
\shrink{10pt}

\begin{abstract}
We present Neural Spectral Methods, a technique to solve parametric Partial Differential Equations (PDEs), grounded in classical spectral methods.
Our method uses orthogonal bases to learn PDE solutions as mappings between spectral coefficients. In contrast to current machine learning approaches which enforce PDE constraints by minimizing the numerical quadrature of the residuals in the spatiotemporal domain, we leverage Parseval's identity and introduce a new training strategy through a \textit{spectral loss}.
Our spectral loss enables more efficient differentiation through the neural network, and substantially reduces training complexity.
At inference time, the computational cost of our method remains constant, regardless of the spatiotemporal resolution of the domain.
Our experimental results demonstrate that our method significantly outperforms previous machine learning approaches in terms of speed and accuracy by one to two orders of magnitude on multiple different problems, including reaction-diffusion systems, and forced and unforced Navier-Stokes equations.
When compared to numerical solvers of the same accuracy, our method demonstrates a $10\times$ increase in performance speed.
\end{abstract}

\section{Introduction}

Partial differential equations (PDEs) are fundamental for describing complex systems like turbulent flow \citep{temam2001navier}, diffusive processes \citep{friedman2008partial}, and thermodynamics \citep{van1992stochastic}. Due to their complexity, these systems frequently lack closed-form analytical solutions, prompting the use of numerical methods. These numerical techniques discretize the spatiotemporal domain of interest and solve a set of discrete equations to approximate the system's behavior.

Spectral methods are one such class of numerical techniques, and are widely recognized for their effectiveness~\citep{boyd2001chebyshev, gottlieb1977numerical}. These methods approximate PDE solutions as a sum of basis functions and transform the equations into the spectral domain. Spectral methods are known for their fast convergence and computational efficiency, especially for problems with smooth solutions. They are notably impactful in fields like computational fluid dynamics \citep{peyret2002spectral}. 

Numerical methods can be computationally expensive because a fine discretization of the physical domain and a large number of time-stepping iterations are often required to achieve high accuracy. Additionally, many engineering applications require solving systems under different parameters or initial conditions, necessitating multiple iterations to repeatedly solve such systems. The aforementioned spectral methods also rely on time-stepping schemes, and present similar challenges to other numerical methods. These factors underscore the need for more efficient computational strategies.

Recent advances in machine learning (ML) highlight neural networks (NNs) as potential alternatives or enhancements to traditional numerical solvers. Consider PDEs on a regular domain $\Omega \subseteq \mathbb R^d$:
\begin{equation}
\begin{aligned}
    \mathcal F_\phi(u(x)) &= 0 \quad \text{in } \Omega, \\
    \mathcal B_\phi(u(x)) &= 0 \quad \text{on } \partial\Omega,
\end{aligned}
\end{equation}
where $u$ is the classical solution, $\mathcal F_\phi$ is the potentially nonlinear differential operator, and $\mathcal B_\phi$ are the boundary condition(s). Both operators are parameterized by $\phi$, which could correspond to initial conditions or parameters associated with $\Omega$, such as diffusion coefficients. A common ML approach is to train NNs on datasets of numerical solutions spanning various parameters, and then learn the mapping $\mathcal G_\theta: \phi \mapsto u_\theta$ from parameters to solutions. This approach is typically done via supervised learning, where the NN is trained by minimizing the error between the predicted solution and the numerical solution. At inference time, the goal is to generalize to previously unseen parameters and directly predict solutions. By amortizing computational costs during training and having rapid inference, these data-driven approaches have the potential to be more efficient than traditional numerical solvers.

When there is information about the governing physical laws, another common ML approach is to train the NN through a loss function that imposes a soft constraint on these laws~\citep{raissi2019physics}. In this case, the NN predicts a solution $u_{\theta}$, and then the corresponding residual function, $R(x) = \mathcal F_{\phi}(u_\mathsf{\theta}(x))$, is minimized on a set of sampled spatial and/or temporal points. Specifically, an additional loss term is defined as the numerical quadrature to the residual norm:
\shrink{3pt}
\begin{equation}
\label{eqn:lpinn}
    L_{\mathsf{PINN}}(u_{\theta}) \coloneqq \frac 1 N \sum_{n \in [N]} R(x_n)^2 \approx ||R(x)||^2_{\mathcal L^2(\Omega)},\quad x_n \sim \text{i.i.d. } \mathcal{U}(\Omega).
\shrink{3pt}
\end{equation}
This is often called a Physics-Informed Neural Network (PINN) loss function, which we denote as $L_{\mathsf{PINN}}(u_{\theta})$. In practice, this approach does not require solution data on the interior of the domain, and the NN can be trained by only satisfying the PDE(s) of interest. This loss function can also be combined with a data fitting loss and trained together. However, this can often be impractical, as it requires knowing both the underlying governing physical laws \textit{and} having solution data (either through a numerical solver or through observational measurements).

Most current ML methods to solve PDEs \citep{kochkov2021machine, li2020fourier} are grid-based approaches. To model parametric solutions, these models create a mesh, and perform the parameters-to-solutions mapping through non-local transformations on function values at points in the mesh. One such example are neural operators \citep{kovachki2021neural}, which parameterize the mapping using iterative kernel integrals, and have been applied to a wide range of engineering problems \citep{zhang2022fourier, kurth2023fourcastnet}. Neural operators can be trained through supervised learning procedures (a loss function that matches solution data), self-supervised methods (such as the aforementioned PINN loss function), and a combination of both.

Data-fitting and PINN loss approaches both have several limitations:
\begin{itemize}

    \item \textbf{Data availability.} The effectiveness of data-driven methods generally depends on the availability of large datasets consisting of PDE parameters and corresponding solutions.
    For complex problems, solution data can only be generated from expensive solvers (or through observations), and also includes inherent numerical errors. When solving new systems, new solution data often needs to be generated, which can be time-consuming. %regenerating solution data makes these data-driven methods impractical.

    \item \textbf{Optimization.} Empirical evidence has suggested that minimizing the PINN loss often encounters convergence issues, particularly for complex problems, resulting in subpar accuracy.
    This is likely attributed to the ill-posed nature of the optimization problem that arises from incorporating PDE constraints into the model \citep{krishnapriyan2021characterizing, wang2021understanding}.

    \item \textbf{Computation cost.} Computing the PINN loss involves evaluating the differential operator $\mathcal F$ at sampled points, which requires computing higher-order derivatives of \fix{$u_\theta$}.
    As the complexity of \fix{$u_\theta$} increases, the computation cost of back-propagation scales significantly. Moreover, accurate estimation of the residual norm requires a substantial amount of sampled points to enforce the PDE. For neural operators such as the commonly used Fourier Neural Operator (FNO)~\cite{li2020fourier}, differentiation costs scale quadratically with the number of points. This is due to the use of Fourier transform, and this makes it intractable to take exact derivatives for large numbers of sampled points \new{(see discussions in \sref{complexity})}.

\end{itemize}

To address these issues with accuracy and efficiency, our work explores the incorporation of ML with spectral methods.
We focus on a data-constrained setting to learn the solution operator of parameterized PDEs, where we assume that we have no solution data on the interior of the spatiotemporal domain and only train our model by minimizing the PDE residual.
Given the form of a differential operator $\mathcal F_\phi$, the model learns to map the parameter function $\phi$ to the corresponding solution $u_\theta$. Our key insights are to learn the solution as a series of orthogonal basis functions, and leverage Parseval's Identity to obtain a loss function in the spectral domain. While prior approaches minimize the approximated residual function by computing higher-order derivatives on the sampled points, our method exploits properties of the spectral representation to obtain the exact residual via algebraic operations on the prediction.
Our contributions are summarized as follows: 

\begin{itemize}

    \item We propose Neural Spectral Methods (NSM) to learn PDE solutions in the spectral domain. Our model parameterizes spectral transformations with NNs, and learns by minimizing the norm of the residual function in the spectral domain.
    Since solution data can be expensive to generate for every new problem, we focus on scenarios with no solution data on the interior of the domain.

   \item We introduce a spectral-based neural operator that can learn transformations in a spectral basis.
   By operating on fixed collocation points, our proposed spectral-based neural operator avoids aliasing error and avoids scaling the computational cost with grid resolution.

    \item We introduce a spectral loss to train models entirely in the spectral domain. By utilizing the spectral representation of the prediction and Parseval's identity, the residual norm is computed by exact operations on the spectral coefficients. This approach avoids sampling a large number of points and the numerical quadrature used by the PINN loss, thereby simplifying computation complexity.

    \item We provide experimental results on three PDEs: Poisson equation \sref{exp:poisson}, Reaction-Diffusion equation \sref{exp:Reaction}, and Navier-Stokes equations \sref{exp:navierstokes}. Our approach consistently achieves a minimum speedup of $100\times$ during training and $500\times$ during inference. It 
    also surpasses the accuracy of grid-based approaches trained with the PINN loss by over $10\times$.
    When tested on different grid resolutions, our method maintains constant computational cost and solution error.
    In comparison to iterative numerical solvers that achieve equivalent accuracy, our method is an order of magnitude faster.

\end{itemize}

\shrink{10pt}
\section{Related works}
\label{related}

    \shrink{5pt}
    \paragraph{ML methods for solving PDEs.} Using NNs to solve PDEs has become an active research focus in scientific computing \citep{
    han2018solving, raissi2019physics, lu2021deepxde}. %
    \citet{he2018relu, mitusch2021hybrid} explore finite element methods in NNs.
    \citet{yu2018deep, sirignano2018dgm, ainsworth2021galerkin, bruna2022neural} recast PDEs into variational forms and apply the Galerkin projection minimization via sampling-based losses.
    \citet{sharma2022accelerated} accelerate discretized PDE residual computation. \citet{wang2021eigenvector} focus on Fourier features and eigenvalue problems.
    In the context of spectral methods, %
    \new{\citet{lange2021fourier} focus on data-fitting; \citet{dresdner2022learning} learn to correct numerical solvers; \citet{xia2023spectrally, lutjens2021spectral} use spectral representations in spatial domains; and \citet{meuris2023machine} extract basis functions from trained NNs for downstream tasks.}
    These studies differ in problem settings and deviate from our method in both architecture and training.

    \shrink{7pt}
    \arxiv{\vspace*{-7pt}}
    \paragraph{Neural operators.} Neural operators \citep{kovachki2021neural, li2020fourier, gupta2021multiwavelet} learn mappings between functions, such as PDE parameters or initial conditions to the PDE solutions. They are typically trained in a supervised learning manner on parameter--solution datasets, and aim to learn the functional mappings between them (i.e., the solution operator). However, the supervised learning setting poses a challenge in data generation for new or complex problems, especially when the data is scarce or the numerical solvers generating it are inefficient.
    
    One of the most common neural operators is the Fourier Neural Operator (FNO) \citep{li2020fourier, kurth2023fourcastnet, zhang2022fourier}. The training process for FNO consists of performing convolutions through Fourier layers and learning in the frequency domain. The Spectral Neural Operator (SNO) \citep{fanaskov2022spectral} was proposed to reduce aliasing error in general neural operators by utilizing a feed-forward network to map between spectral coefficients. Similarly, the TransformOnce (T1)~\citep{poli2022transform} model looks at learning transformations in the frequency domain with an improved model architecture. However, both models have a number of architecture and training differences from NSM, and as we will show, have much poorer accuracy. They also only look at the supervised learning setting. %

    \shrink{7pt}
    \arxiv{\vspace*{-7pt}}
    \paragraph{Physics-Informed Neural Networks (PINNs).}
    The physics-informed neural networks (PINNs) framework \citep{raissi2019physics} adds the governing physical laws (i.e., PDE residual function), estimated on sampled points, into the NN's loss function. This additional term, which we refer to as a \textit{PINN loss} (\eref{eqn:lpinn}), acts as a soft constraint to regularize the model's prediction of the PDE solution, and can be considered a self-supervised loss function. This approach can also be used in a operator learning setting across various architectures, where the base architecture is a neural operator and the PINN loss is used to train the model~\cite{li2021physics, tripura2023physics, rosofsky2023applications, goswami2022physics}. 

    However, the PINN approach requires evaluating the PDE residual on a large number of sampled points in the interior domain. In scenarios with higher-order derivatives in the PDE residual, multiple differentiations through the NN are required. When using the PINN loss with grid-based neural operators such as FNO, the Fast Fourier Transform (FFT) in the forward pass escalates to quadratic in batched differentiation with respect to the number of sampled points, making exact residual computation through auto-differentiation computationally expensive (see discussions in \sref{complexity} for more details). 
    As we will show, these \textit{grid-based} methods are inaccurate and often overfit to the training grid size because of aliasing error.

\section{Neural Spectral Methods}
\label{spectral_methods}

We introduce Neural Spectral Methods (NSM), a general class of techniques to incorporate spectral methods with NNs. NSM learns the mapping from PDE parameters to PDE solutions, i.e., $\mathcal G_\theta : \phi \mapsto u_\theta$, and is shown in~\fref{fig:nsm}. NSM consists of two key components: a base NN architecture (\fref{fig:nsm.a}) that maps the spectral representation of the parameters $\phi$ to that of its solutions $u_\theta$, and a spectral training procedure (\fref{fig:nsm.b}) that minimizes the spectral norm of the PDE residual function.

In~\sref{neural_operator_transformation}, we describe our core NN architecture, which incorporates spectral methods within a neural operator framework. In~\sref{spectral_training}, we introduce our spectral training procedure. Finally, in~\sref{experiments}, we demonstrate the strong empirical performance of NSM for learning solutions to different PDE classes.

\subsection{Background}

\paragraph{Notation.} A set of functions on $\Omega$ is denoted as $\{f_m(x) \!:\! x \in \Omega \}_{m \in \mathcal I}$ where $\mathcal I$ is a countable index set. We denote integer indices by $[n] \coloneqq \{1, 2, .., n\}$. The $n$th component of a vector $x$ is denoted as $x_n$. Given a set of basis functions, the spectral coefficients of a function $u$ are denoted as $\tilde u$. \new{For an integer $k > 0$, the $k$th order Sobolev space~\citep{evans2022partial} on domain $\Omega \subseteq \mathbb R^d$ is denoted as $\mathcal H^k(\Omega)$.}

    \paragraph{Orthogonal basis.} Orthogonal basis functions are fundamental components of spectral methods.
    For completeness, the definition of orthogonality is provided in \sref{orthogonal_basis}. The choice of the basis functions is problem-specific and they must have desirable spectral properties. In this paper, we focus on two commonly used orthogonal bases for periodic and other types of boundary conditions, respectively:

    \begin{example}[Fourier basis]
    \label{fourier}

        $\{\sin(m x), \cos(m x) \!:\! x \in 2\pi\mathbb T \}_{m \in \mathbb N}$ w.r.t Lebesgue measure.
    
    \end{example}

    \begin{example}[Chebyshev polynomials]
    \label{chebyshev}
    
        $\{ T_m(x) \!:\! x \in [-1, 1] \}_{m \in \mathbb N}$ w.r.t $\frac {d\mu} {dx} = 1 / \sqrt{1 - x^2}$, where
        $$
            T_m(x) \coloneqq \cos(m \cos^{-1}(x)) \equiv (-1)^m \sin(m \sin^{-1}(x)).
        $$
    
    \end{example}

    For multi-dimensional problems, we can verify that the product of basis in each dimension preserves completeness and orthogonality (see Proposition \ref{prop:product}).
    Spectral representations are known for their efficiency in representing smooth functions. Specifically, Fourier and Chebyshev interpolations possess the well-known spectral decay for sufficiently smooth functions \citep{mason2002chebyshev}:

    \begin{fact}
    \label{decay_fourier}
    
        For any $f \in \mathcal H^p(\mathbb T^d)$, its Fourier series coefficients $|\tilde f_m| = \mathcal O(1 / m^{p})$.

    \end{fact}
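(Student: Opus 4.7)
The plan is the standard integration-by-parts argument for Sobolev functions on the torus, pushing derivatives from the exponential kernel onto $f$ to extract powers of $m$. By definition, the Fourier coefficient on $\mathbb T^d$ is
$$
    \tilde f_m = \frac{1}{(2\pi)^d} \int_{\mathbb T^d} f(x)\, e^{-i m \cdot x}\, dx,
$$
and since $\mathbb T^d$ has no boundary, every integration by parts costs no boundary term and produces a factor of $i m_j$ when differentiating in the $j$th coordinate. The goal is to convert this into a pointwise bound on $|\tilde f_m|$ scaling as $|m|^{-p}$.

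First, I would fix the multi-index to exploit. Given $m \in \mathbb Z^d$ with $|m|$ large, pick the coordinate $j^\star$ with $|m_{j^\star}| = \max_j |m_j|$; then $|m_{j^\star}| \ge |m|/\sqrt d$. Applying integration by parts $p$ times in the $x_{j^\star}$ direction (using periodicity to kill boundary terms) gives the identity
$$
    (i m_{j^\star})^p\, \tilde f_m = \widetilde{\partial_{j^\star}^{\,p} f}_m,
$$
valid for $f \in \mathcal H^p(\mathbb T^d)$ by a standard density/limit argument extending the classical identity from $\mathcal C^\infty$ to Sobolev regularity via weak derivatives.

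Next, I would bound the right-hand side uniformly in $m$. Since $f \in \mathcal H^p(\mathbb T^d)$, the weak derivative $\partial_{j^\star}^{\,p} f$ lies in $L^2(\mathbb T^d)$, and because $\mathbb T^d$ has finite measure, also in $L^1(\mathbb T^d)$ with $\|\partial_{j^\star}^{\,p} f\|_{L^1} \le (2\pi)^{d/2}\|\partial_{j^\star}^{\,p} f\|_{L^2} \le C\, \|f\|_{\mathcal H^p}$ by Cauchy--Schwarz. The Fourier coefficient of an $L^1$ function is bounded by its $L^1$ norm (Riemann--Lebesgue), so
$$
    \bigl|\widetilde{\partial_{j^\star}^{\,p} f}_m\bigr| \le \frac{1}{(2\pi)^d} \|\partial_{j^\star}^{\,p} f\|_{L^1} \le C'\, \|f\|_{\mathcal H^p}.
$$
Combining with the previous identity and the inequality $|m_{j^\star}|^p \ge |m|^p / d^{p/2}$ yields $|\tilde f_m| \le C''\, \|f\|_{\mathcal H^p} / |m|^p$, which is the claimed $\mathcal O(1/m^p)$ bound.

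The only subtle step is the first: justifying integration by parts at the level of weak derivatives and ensuring the identity $(im_{j^\star})^p \tilde f_m = \widetilde{\partial_{j^\star}^{\,p} f}_m$ holds for $f \in \mathcal H^p$ rather than only for smooth $f$. This is handled by approximating $f$ in $\mathcal H^p$-norm by trigonometric polynomials (or $\mathcal C^\infty$ functions, dense in $\mathcal H^p(\mathbb T^d)$) and passing to the limit; both sides are continuous in $\mathcal H^p$-norm for each fixed $m$, so the identity persists. Everything else is routine.
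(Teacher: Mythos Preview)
Your argument is correct and is the standard textbook route: integrate by parts $p$ times in the dominant coordinate direction, use periodicity to discard boundary terms, then bound the resulting Fourier coefficient of $\partial_{j^\star}^{\,p} f$ by its $L^1$ (hence $L^2$) norm. The density argument you give for extending the identity $(im_{j^\star})^p \tilde f_m = \widetilde{\partial_{j^\star}^{\,p} f}_m$ from smooth to $\mathcal H^p$ functions is the right way to close that gap, and the $m$-dependent choice of $j^\star$ is harmless since there are only $d$ pure directional derivatives to control.

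There is nothing to compare against in the paper itself: the statement is recorded there as a \emph{Fact} with a citation to \citet{mason2002chebyshev} and is not proved in the text. So your write-up supplies a complete proof where the paper simply invokes the result as well-known. If you want to align notation, note that the paper works with the real Fourier basis $\{\sin(mx),\cos(mx)\}$ rather than complex exponentials, but the decay statement is of course equivalent.
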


    \begin{fact}
    \label{decay_cheybshev}
    
        For any $f \in \mathcal H^p([-1, 1]^d)$, its Chebyshev expansion $|\tilde f_m| = \mathcal O(1 / m^{p})$.

    \end{fact}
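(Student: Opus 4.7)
The plan is to reduce Fact \ref{decay_cheybshev} to the Fourier decay result of Fact \ref{decay_fourier} via the classical trigonometric substitution $x_i = \cos\theta_i$. Using the defining identity $T_m(\cos\theta) = \cos(m\theta)$ together with the change of variables, the Chebyshev coefficient becomes
\begin{equation*}
\tilde f_m \;=\; \frac{c_m}{\pi^d} \int_{[0,\pi]^d} g(\theta) \prod_{i=1}^d \cos(m_i \theta_i)\, d\theta,\qquad g(\theta) \coloneqq f(\cos\theta_1, \dots, \cos\theta_d),
\end{equation*}
where $c_m$ is the standard normalization constant. Hence $\tilde f_m$ is a multidimensional Fourier cosine coefficient of $g$, and after extending $g$ evenly and $2\pi$-periodically in each coordinate to a function on $\mathbb T^d$, Fact \ref{decay_fourier} will deliver the claimed decay provided I can establish $g \in \mathcal H^p(\mathbb T^d)$.

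The next step is to verify this Sobolev regularity. By repeated application of the chain rule, any mixed partial derivative $\partial^\alpha_\theta g$ with $|\alpha| \leq p$ is a finite linear combination of terms $Q_{\alpha,\beta}(\theta)\,(\partial^\beta_x f)(\cos\theta_1,\ldots,\cos\theta_d)$ with $|\beta| \leq |\alpha|$, where each coefficient $Q_{\alpha,\beta}$ is a trigonometric polynomial in $\sin\theta_i$ and $\cos\theta_i$. The top-order coefficient is $Q_{\alpha,\alpha}(\theta) = \prod_i (-\sin\theta_i)^{\alpha_i}$, which contributes a factor $\prod_i (1 - x_i^2)^{\alpha_i}$ after the inverse change of variables; when combined with the Jacobian $\prod_i (1-x_i^2)^{-1/2}$, this yields a bounded weight, so the $L^2(\mathbb T^d)$ norm of the top-order term is controlled by $\|\partial^\alpha_x f\|_{L^2([-1,1]^d)}$. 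The lower-order terms are handled by an analogous accounting, using the structural fact that each $\theta$-differentiation introduces an extra $\sin\theta_i$ factor that helps tame the Jacobian singularity.

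Once $g \in \mathcal H^p(\mathbb T^d)$ is in hand, Fact \ref{decay_fourier} applied to its periodic extension yields $|\tilde f_m| = \mathcal O(1/m^p)$, completing the argument. Alternatively, one can bypass an appeal to Fact \ref{decay_fourier} and directly integrate by parts $p$ times in the oscillatory variable of largest frequency; the boundary terms vanish by periodicity of the extension, and a Cauchy--Schwarz estimate on the remaining integral yields the same rate.

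The main obstacle is precisely the regularity transfer in the second step: the Chebyshev measure is singular at $x_i = \pm 1$, so it is not a priori clear that a function with \emph{unweighted} Sobolev regularity on $[-1,1]^d$ pulls back to a function with unweighted Sobolev regularity on $\mathbb T^d$. The saving grace is the exact cancellation between the powers of $\sin\theta_i$ produced by chain-rule differentiation and the $1/\sqrt{1-x_i^2}$ weights coming from the Jacobian. Tracking this cancellation cleanly across all orders, and verifying that the mixed lower-order terms also produce bounded weights (rather than ones that reintroduce a singularity), is where the real technical care is required.
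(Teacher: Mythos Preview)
The paper does not actually prove this statement: it is recorded as a \emph{Fact} with a citation to \citet{mason2002chebyshev} and no argument is given. So there is no ``paper's proof'' to compare against; your cosine-substitution reduction to Fact~\ref{decay_fourier} is precisely the classical route one finds in that reference.

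That said, your sketch has a real gap in the regularity-transfer step. You assert that ``each $\theta$-differentiation introduces an extra $\sin\theta_i$ factor,'' and that this tames the Jacobian for the lower-order terms. This is false: already in one dimension, $g''(\theta) = -\cos\theta\, f'(\cos\theta) + \sin^2\theta\, f''(\cos\theta)$, and the lower-order coefficient $-\cos\theta$ does \emph{not} vanish at $\theta\in\{0,\pi\}$. After the change of variables one is left with $\int_{-1}^{1} x^2\,|f'(x)|^2\,(1-x^2)^{-1/2}\,dx$, whose weight is unbounded; this is not controlled by $\|f'\|_{L^2}$ alone. In $d=1$ the fix is Sobolev embedding ($f'\in H^{p-1}\hookrightarrow L^\infty$ for $p\geq 2$), but in higher dimensions that embedding fails in general and the bookkeeping you describe does not close. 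The honest version of the argument either (i) works in suitably \emph{weighted} Sobolev spaces where the Chebyshev weight is built in, or (ii) bypasses the $g\in\mathcal H^p$ claim entirely and integrates by parts $p$ times directly in the $\theta$-integral, bounding each resulting term individually using the extra smoothness of $f^{(j)}$ for $j<p$. Your final paragraph correctly flags this as ``where the real technical care is required,'' but the mechanism you propose for resolving it is the wrong one.
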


\begin{figure}[t]
    \centering
    \includegraphics[width=\linewidth]{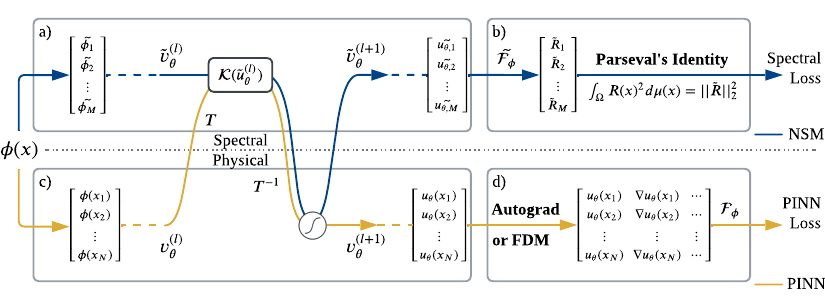}

    \captionsetup{width=\linewidth}
    \caption{\textbf{Schematic of NSM.}
    We refer to Neural Spectral Methods (NSM) as a general approach to learn PDE solution mappings in the spectral domain. Our method consists of two components:
    \textbf{a)} The parameters $\phi$ are converted to spectral coefficients, $\tilde{\phi}$. In each NN layer $l$, the spectral coefficients $\tilde v_\theta^{(l)}$ are transformed by a linear operator $\tilde {\mathcal K}$, with the activation $\sigma$ then applied on collocation points in the physical space.
    \textbf{b)} The prediction $\tilde u_\theta$ is transformed by $\tilde {\mathcal F_\phi}$, the spectral form of the differential operator, which gives the spectral coefficients $\tilde R$ of the residual function. The exact residual norm is obtained by Parseval's Identity, giving the spectral loss $||\tilde R||_2^2$.
    We contrast our method against the commonly employed grid-based neural operators with a PINN loss.
    \textbf{c)} General neural operators learn the PDE solutions as transformations of function values on $x_i$. We consider the kernel integral in a more general sense, with transformation $T$ not restricted to a Fourier basis.
    \textbf{d)} Autograd or finite difference methods are used to obtain the higher-order derivatives. The PINN loss is then obtained by approximating the norm of the residual function on the sampled points.
    }

    \label{fig:nsm}

    \phantomsubcaption\label{fig:nsm.a}
    \phantomsubcaption\label{fig:nsm.b}
    \phantomsubcaption\label{fig:nsm.c}
    \phantomsubcaption\label{fig:nsm.d}

    \shrink{5pt}
\end{figure}

    \shrink{5pt}
    \paragraph{Neural operators.} Neural operators parameterize the mapping $\mathcal G_\theta : \phi \mapsto u_\theta$
    as,
    \begin{equation}
        \mathcal G_\theta \coloneqq \mathcal Q \circ \sigma(\mathcal W^{(L)} + \mathcal K^{(L)}) \circ \cdots \circ \sigma(\mathcal W^{(1)} + \mathcal K^{(1)}) \circ \mathcal P,
    \end{equation}
    where $\sigma$ is a non-linearity. The operator iteratively updates $v_\theta^{(l)} : \Omega \rightarrow \mathbb R^{d_l}$, where $d_l$ is the hidden dimension of layer $l$. The input layer $v_\theta^{(0)} = \mathcal P(\phi)$ and output layer $u_\theta = \mathcal Q(v_\theta^{(L)})$ are parameterized by point-wise NNs.
    The affine operator $\mathcal W^{(l)}$ and the kernel integral operator $\mathcal K^{(l)}$ are defined as,
    \begin{equation}
        (\mathcal W^{(l)}(v))(x) = W^{(l)} v(x) + b^{(l)},\quad
        (\mathcal K^{(l)}(v))(x) = \int_\Omega K^{(l)}(x, y) v(y) d\mu(y),
    \end{equation}
    where the input $v$ is in $\Omega \rightarrow \mathbb R^{d_{l-1}}$ and the outputs are in $\mathbb R^{d_l}$. The affine operator and kernel integral operator are used to capture local and non-local transformations, respectively.
    \fix{Given input grid $[x_i]$, general grid-based neural operators parameterize $\mathcal G_\theta$ as a mapping between function values:}
    \begin{equation}
        \begin{bmatrix}
        \phi(x_1) & \phi(x_2) & \dots & \phi(x_N)
        \end{bmatrix}
        \mapsto
        \begin{bmatrix}
        u_\theta(x_1) & u_\theta(x_2) & \dots & u_\theta(x_N)
        \end{bmatrix}.
    \end{equation}

\subsection{Spectral-based neural operators}
\label{neural_operator_transformation}

In this work, we employ the neural operator architecture to model transformations between spectral coefficients. By fixing $\{f_m(x) : x \in \Omega \}_{m \in \mathcal I}$ as the chosen basis functions, the solution operator is parameterized as the mapping between the coefficients of the parameter functions $\phi$
and the predicted solutions $u_\theta$. Suppose the series is truncated to $M$ terms, then $\mathcal G_\theta$ is parameterized in the spectral domain as:
\begin{equation}
\label{main_eqn}
    \phi(x) = \sum_{m \in [M]} \tilde \phi_m f_m(x) \mapsto u_\theta(x) = \sum_{m \in [M]} \tilde u_{\theta,m} f_m(x),
\end{equation}
where $\tilde \phi$ and $\tilde u_\theta$ are the spectral expansion of $\phi$ and $u_\theta$ under the basis $f$.
For each layer $l$, the function $v_\theta^{(l)}$ and kernel $K^{(l)}$ are also parameterized under the same basis functions,
    \begin{equation}
        v_\theta^{(l)}(x) = \sum_{m \in [M]} \tilde v^{(l)}_{\theta,m} f_m(x),\quad
        K^{(l)}(x, y) = \sum_{m \in [M]^2} \tilde K^{(l)}_m f_{m_1}(x) f_{m_2}(y),
    \end{equation}
where $\tilde v_\theta^{(l)} \in \mathbb R^{M \times d_l}$ and $\tilde K^{(l)} \in \mathbb R^{M \times M \times d_l \times d_{l-1}}$ are coefficients in the spectral domain. Due to orthogonality, integral transformations $\mathcal K^{(l)}$ are actually equivalent to tensor contractions $\tilde v^{(l-1)}_{\theta,m_2 i} \cdot \tilde K^{(l)}_{m_1 m_2 i j}$. Similarly, affine transformations $\mathcal W^{(l)}$ are equivalent to $\tilde v^{(l-1)}_{\theta,m i} \cdot W^{(l)}_{i j} + b^{(l)}_{m j}$.

\paragraph{Non-linear activation functions.} The activation $\sigma$ is applied on the collocation points. 
    We denote $\mathcal T$ as the interpolation of function values at collocation points aligned with the basis, and $\mathcal T^{-1}$ as the function value evaluation on those collocation points. Then $\tilde \sigma$, the spectral counterpart of the activation function $\sigma$, is given by:
    \begin{equation}
        \tilde\sigma = \mathcal T \circ \sigma \circ \mathcal T^{-1}.
    \end{equation}
\paragraph{Aliasing error.} General grid-based neural operators are prone to aliasing error. When trained and tested on different grid resolutions, the interpolation error is inconsistent, leading to an increased error on the test grids that are a different resolution from the training grid. In contrast, our spectral-based approach circumvents aliasing error. By operating exclusively on fixed collocation points, the interpolation error remains consistent across different input resolutions. This also ensures that the model's computation cost and predictions are resolution-independent.
\label{neural_operator_low_rank}

    \paragraph{Kernel approximation.} Within each layer, the computational cost of the kernel integral is quadratic. To mitigate this cost, the kernel can be confined to a low-rank or simplified form. Here, we employ FNO's approach, which simplifies the kernel integral into a convolution through a Fourier transformation. More broadly, the kernel is restricted to a diagonal form, $\tilde K^{(l)} \in \mathbb R^{M \times d_l \times d_{l-1}}$, and,
    \begin{equation}
        K^{(l)}(x, y) = \sum_{m \in [M]} \tilde K^{(l)}_m f_m(x) f_m(y).
    \end{equation}

\shrink{10pt}
\subsection{Spectral training}
\label{spectral_training}

After our base NN architecture predicts the solution $\tilde u$ in \eref{main_eqn}, we train our model using a spectral training procedure. Here, we describe the details of our spectral training procedure.

\paragraph{Spectral form of the residual function.} We can derive the exact residual function using the spectral representation of the prediction,  denoted by $\tilde u$.
The spectral representation has a direct correspondence between operations performed on function values and on spectral coefficients. Additionally, differentiation and integration are transformed to algebraic operations on the coefficients. Given the PDE operator $\mathcal F_\phi$, we convert it to its spectral correspondence $\tilde {\mathcal F}_\phi: \tilde u_\theta \mapsto \tilde R$, such that,
\begin{equation}
    \mathcal F_\phi(u_\theta(x)) = \sum_{m \in \mathcal I} \tilde R_m f_m(x),
\end{equation}
where $\tilde R$ represents the spectral form of the PDE residual function. We describe in more detail how to obtain $\tilde {\mathcal F}_\phi$ from $\mathcal F_\phi$ for typical nonlinear operators and bases composed of Fourier series and Chebyshev polynomials in \sref{correspondence}.

\paragraph{Spectral loss.} After computing the residual function, we aim to minimize our \emph{spectral loss}, $||\tilde R||_2^2$.
This method involves projecting the residual function onto a subspace spanned by truncated basis functions, known as the Galerkin projection. The orthogonality of the basis functions is crucial to this procedure. Leveraging Parseval's Identity, we can equate the spectral loss to the weighted norm of the residual function, as outlined below:
\begin{theorem}[Parseval's Identity]
    For $R(x) = \sum_{m \in \mathcal I} \tilde R_m f_m(x)$, we have,
    \begin{equation}
        \int_\Omega R(x)^2 d\mu(x) = \sum_{m \in \mathcal I} \tilde R_m^2.
    \end{equation}
\end{theorem}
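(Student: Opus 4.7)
The plan is to reduce the identity to the orthonormality of the basis $\{f_m\}_{m \in \mathcal I}$ by expanding $R(x)^2$ as a double series and interchanging integration with summation. Concretely, I would substitute the given expansion to write
$$R(x)^2 = \Bigl(\sum_{m \in \mathcal I} \tilde R_m f_m(x)\Bigr)\Bigl(\sum_{n \in \mathcal I} \tilde R_n f_n(x)\Bigr) = \sum_{m,n \in \mathcal I} \tilde R_m \tilde R_n\, f_m(x) f_n(x),$$
integrate both sides against $d\mu$ over $\Omega$, and invoke the orthonormality relation $\int_\Omega f_m(x) f_n(x)\, d\mu(x) = \delta_{mn}$ coming from the orthogonal basis definition referenced in \sref{orthogonal_basis}. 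The off-diagonal terms vanish and the diagonal terms collapse to $\sum_{m \in \mathcal I} \tilde R_m^2$, which is the claim.

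The only delicate step is justifying the exchange of the integral with a potentially infinite double sum. My preferred route is a standard truncation argument: first establish the identity for the finite partial sum $R^{(N)}(x) \coloneqq \sum_{m \in [N]} \tilde R_m f_m(x)$, where orthonormality applies term-by-term with no convergence issues and yields $\|R^{(N)}\|_{\mathcal L^2(\Omega,\mu)}^2 = \sum_{m \in [N]} \tilde R_m^2$. Then, under the implicit assumption that $R \in \mathcal L^2(\Omega,\mu)$ so that $R^{(N)} \to R$ in $\mathcal L^2$, I would pass to the limit $N \to \infty$: the left-hand side converges by continuity of the $\mathcal L^2$ norm, and the right-hand side converges by monotone convergence since $\tilde R_m^2 \ge 0$. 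This is the usual Pythagoras-in-Hilbert-space argument and needs no further machinery.

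I do not anticipate a real obstacle, since the statement is a textbook consequence of orthonormality. The one point to watch is normalization: the Fourier and Chebyshev families in Examples \ref{fourier} and \ref{chebyshev} are orthogonal but not orthonormal as written, so the constants $\int_\Omega f_m^2\, d\mu$ must be absorbed either into $\tilde R_m$ or into the right-hand side before the displayed equality holds verbatim. Provided the basis is taken in its orthonormalized form (or the coefficients $\tilde R_m$ are defined accordingly), the argument above gives the identity without modification.
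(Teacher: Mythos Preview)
Your argument is correct and is the standard Hilbert-space proof of Parseval's identity; the normalization caveat you flag is also handled in the paper, which explicitly assumes orthonormality after Definition~\ref{def_orthogonal}. Note, however, that the paper does not supply its own proof of this theorem---it is stated as a classical fact and used directly---so there is no paper-side argument to compare against.
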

\paragraph{Contrast with the PINN loss.} As previously discussed, the PINN loss is obtained by sampling a substantial number of points within the interior domain, followed by employing a numerical quadrature to approximate the integral of $R(x)^2$. This requires differentiating through the NN, which can be computationally expensive, especially for higher-order derivatives, and when a large number of points are sampled to ensure accurate quadrature. \new{For the grid-based PINN loss, which commonly uses finite difference methods to approximate the derivatives, we have provided an error analysis in \sref{example_elliptic}. From a theoretical perspective, we show that as long as the grid spacing is finite, the expected solution error can be non-zero, even if the grid-based PINN loss is minimized arbitrarily well.}

We bypass this process by using the spectral representation of the solution, and apply spectral transformations to represent the residual function in the same basis. This greatly simplifies the entire optimization procedure, and as we will demonstrate, significantly reduces training time.
Note that even though the spectral loss is a weighted norm, the corresponding PINN loss can also be readily constrained:

\begin{corollary}

    For $\frac {dx} {d\mu} \in \mathcal L^\infty(\Omega)$, $||R(x)||_{L^2(\Omega)} = \mathcal O(||\tilde R||_2^2)$.

\end{corollary}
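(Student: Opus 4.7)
The plan is to read the corollary as a straightforward change-of-measure consequence of Parseval's Identity, so the proof has really only two ingredients: the spectral-to-physical conversion provided by Parseval (already stated as the preceding theorem) and an $L^\infty$ comparison between Lebesgue measure $dx$ and the basis's natural weight $d\mu$.

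First I would invoke Parseval's Identity directly on the residual expansion $R(x) = \sum_{m \in \mathcal I} \tilde R_m f_m(x)$ to obtain the exact equality
\[
    \|\tilde R\|_2^2 = \sum_{m \in \mathcal I} \tilde R_m^2 = \int_\Omega R(x)^2 \, d\mu(x).
\]
This gets us into the physical domain but against the weighted measure $\mu$, so the second step is to compare the Lebesgue integral $\|R\|_{L^2(\Omega)}^2 = \int_\Omega R(x)^2 \, dx$ against this weighted integral by inserting the Radon--Nikodym derivative:
\[
    \int_\Omega R(x)^2 \, dx
    = \int_\Omega R(x)^2 \, \frac{dx}{d\mu}(x) \, d\mu(x)
    \le \left\| \tfrac{dx}{d\mu} \right\|_{L^\infty(\Omega)} \int_\Omega R(x)^2 \, d\mu(x).
\]
Combining the two displays yields $\|R\|_{L^2(\Omega)}^2 \le C\,\|\tilde R\|_2^2$ with $C = \|dx/d\mu\|_{L^\infty(\Omega)}$, which is finite by hypothesis, giving the desired $\mathcal O$-bound (taking square roots if one interprets the statement on the non-squared norm, as the asymptotic notation absorbs the constant either way).

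The step I expect to be least mechanical is merely sanity-checking that $dx/d\mu \in L^\infty(\Omega)$ covers the bases of interest. For the Fourier basis on $2\pi\mathbb T$ the weight is just Lebesgue measure so $dx/d\mu \equiv 1$, and for Chebyshev polynomials on $[-1,1]$ we have $d\mu/dx = 1/\sqrt{1-x^2}$, hence $dx/d\mu = \sqrt{1-x^2} \in L^\infty([-1,1])$ with norm $1$; the tensor-product cases follow by the same argument coordinatewise. There is no real obstacle beyond this bookkeeping, since once Parseval is in hand the inequality is just Hölder applied to the density.
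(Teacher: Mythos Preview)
Your proof is correct and is exactly the argument the paper has in mind: the paper's own justification is the single line ``This result follows directly by H\"older's inequality,'' and your change-of-measure step $\int R^2\,dx \le \|dx/d\mu\|_{L^\infty}\int R^2\,d\mu$ combined with Parseval is precisely that H\"older argument spelled out.
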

This result follows directly by Hölder's inequality. Both Fourier series and Chebyshev polynomials fulfill this condition, ensuring that minimizing the spectral loss also minimizes the PINN loss.

\section{Experimental results}
\label{experiments}

We compare NSM to different neural operators with diffeerent loss functions (PINN and spectral losses) on several PDEs: 2D Poission (\sref{exp:poisson}), 1D Reaction-Diffusion (\sref{exp:Reaction}), and 2D Navier-Stokes (\sref{exp:navierstokes}) \new{with both forced and unforced flow}. NSM is consistently the most accurate method, and orders of magnitudes faster during both training and inference, especially on large grid sizes.

\shrink{7px}
\paragraph{Problem setting.} For all experiments, we focus on the \textit{data-constrained setting}, using no interior domain solution data during training (i.e., we train only by minimizing the PDE residual). The task is to learn the mapping from PDE parameters $\phi$ to solutions $u$. During training, each model is given $\mathcal F_\phi$ and the parameters $\phi_i$, which are independently sampled in every training iteration.

Recall that the PINN loss (\eref{eqn:pinn_loss}) and the spectral loss (\eref{eqn:spectral_loss}) are used on grid-based and spectral-based models,
respectively. For the PINN loss, higher-order derivatives are computed using the finite difference method on \fix{a fixed rectangular} grid $
\begin{bmatrix}
    x_1 \!&\! x_2 \!&\! \dots \!&\! x_N
\end{bmatrix}
$. For our spectral loss, the $M$-term residual series is directly transformed from the predicted solution $\tilde {u}_\theta$.

\noindent
\hspace*{-10pt}
\begin{minipage}{\linewidth}
\begin{minipage}{0.59\linewidth}
    \begin{center}
    PINN Loss
    \end{center}
    \begin{equation}
    \label{eqn:pinn_loss}
        \frac 1 {|\{\phi_i\}|} \sum_{\phi_i} \frac 1 N \!\sum_{n \in [N]} F_{\phi_i}(u_{\theta,i}(x_n), \nabla u_{\theta,i}(x_n) ...)^2
    \end{equation}
\end{minipage}
\hfill
\begin{minipage}{0.39\linewidth}
    \begin{center}
    Spectral Loss
    \end{center}
    \begin{equation}
    \label{eqn:spectral_loss}
        \frac 1 {|\{\phi_i\}|} \sum_{\phi_i} \sum_{m \in [M]} \tilde{\mathcal F}_{\phi_i}(\tilde {u}_{\theta,i})_m^2
    \end{equation}
\end{minipage}
\end{minipage}

For each problem, the test set consists of $N=128$ PDE parameters, denoted by $\phi_i$. Each $\phi_i$ is sampled from the same distribution used at training time, and $u_i$ is the corresponding reference solution. For each prediction $u_{\theta}$, we evaluate two metrics: $L_2$ relative error $||u_{\theta,i} - u_i||_2 / ||u_i||_2$ and the PDE residual error $||\mathcal F_\phi(u_{\theta,i})||_2$. Both metrics are computed on the test set resolution, and averaged over the dataset. Additional details about the experimental setup are provided in \sref{experiment_setup}.

We include the following models for comparison:
\begin{itemize}
\setlength\itemsep{0em}
    \item \textbf{FNO + PINN loss.} A grid-based FNO architecture~\citep{li2020fourier}, trained with a PINN loss. The model is trained on different grid sizes, which are indicated by the corresponding labels (e.g., FNO$\times64^2$ means a grid size of $64\times64$ is used to calculate the PINN loss).
    \item \textbf{SNO + Spectral loss.} Ablation model: A base SNO architecture~\citep{fanaskov2022spectral}, trained with our spectral loss.
    \new{\item \textbf{T1 + PINN loss / Spectral loss.} Ablation model: A base TransformOnce architecture~\citep{poli2022transform}, trained with either a PINN loss or our spectral loss.}
    \item \textbf{CNO + PINN loss (ours).} Ablation model: The base architecture is identical to NSM, but trained with a PINN loss on the largest grid size used by FNO, i.e. $256$ on each dimension.
    \item \textbf{NSM (ours).} Our proposed spectral-based neural operator
    using Fourier and Chebyshev basis on periodic and non-periodic dimensions, and it is trained with our spectral loss. 
\end{itemize}

For a fair comparison, the base architecture, number of parameters, and other hyperparameters are kept exactly the same across neural operator-based models. Detailed parameters are provided in \sref{experiment_setup}.

\shrink{5px}
\subsection{Poisson equation}
\label{exp:poisson}

    We study the 2D Poisson equation, $-\Delta u(x, y) = s(x, y),\ x, y \in \mathbb T$, with periodic boundary conditions. The source term $s$ is sampled from a random field with a length scale of $0.2$. The task is to learn the mapping from $s$ to the potential field $u$. We evaluate the predictions using an $L_2$ relative error with respect to the analytical solution. Additional details and results for Dirichlet boundary conditions are in~\sref{exp:poisson-more}.

    \shrink{5pt}
    \begin{table}[htbp]
    \centering
    \caption{$L_2$ relative error (\%) for the periodic Poission equation.}
    \label{tab:periodic_poisson}
    \resizebox{\linewidth}{!}{%
    \begin{tabular}{@{}ccccccc@{}}
        \toprule
        SNO+Spectral & \new{T1$\times64^2$+PINN} & \new{T1+Spectral} & FNO$\times64^2$+PINN & FNO$\times128^2$+PINN & FNO$\times256^2$+PINN & NSM (ours) \\
        \midrule
        $0.59 \pm 0.12$ & \new{$3.22 \pm 0.49$} & \new{$0.302 \pm 0.071$} & $4.24 \pm 0.13$ & $2.01 \pm 0.03$ & $1.75 \pm 0.02$ & $\bm{.057} \pm \bm{.012}$ \\
        \bottomrule
    \end{tabular}}
    \vspace*{-\baselineskip}
    \end{table}

    \paragraph{Results.} The results are summarized in \tref{tab:periodic_poisson}.
    Since the solution operator is an inverse Laplacian, all models can theoretically express it with one layer (see discussions in \ref{exp:poisson-more}), but
    the FNO + PINN models exhibits high error, even when trained with large grids and for a longer training time. This simple example highlights the inherent optimization challenges with the grid-based PINN loss.%

\subsection{Reaction-Diffusion equation}
\label{exp:Reaction}

\begin{figure}[t]
    \centering
    \includegraphics[width=\linewidth]{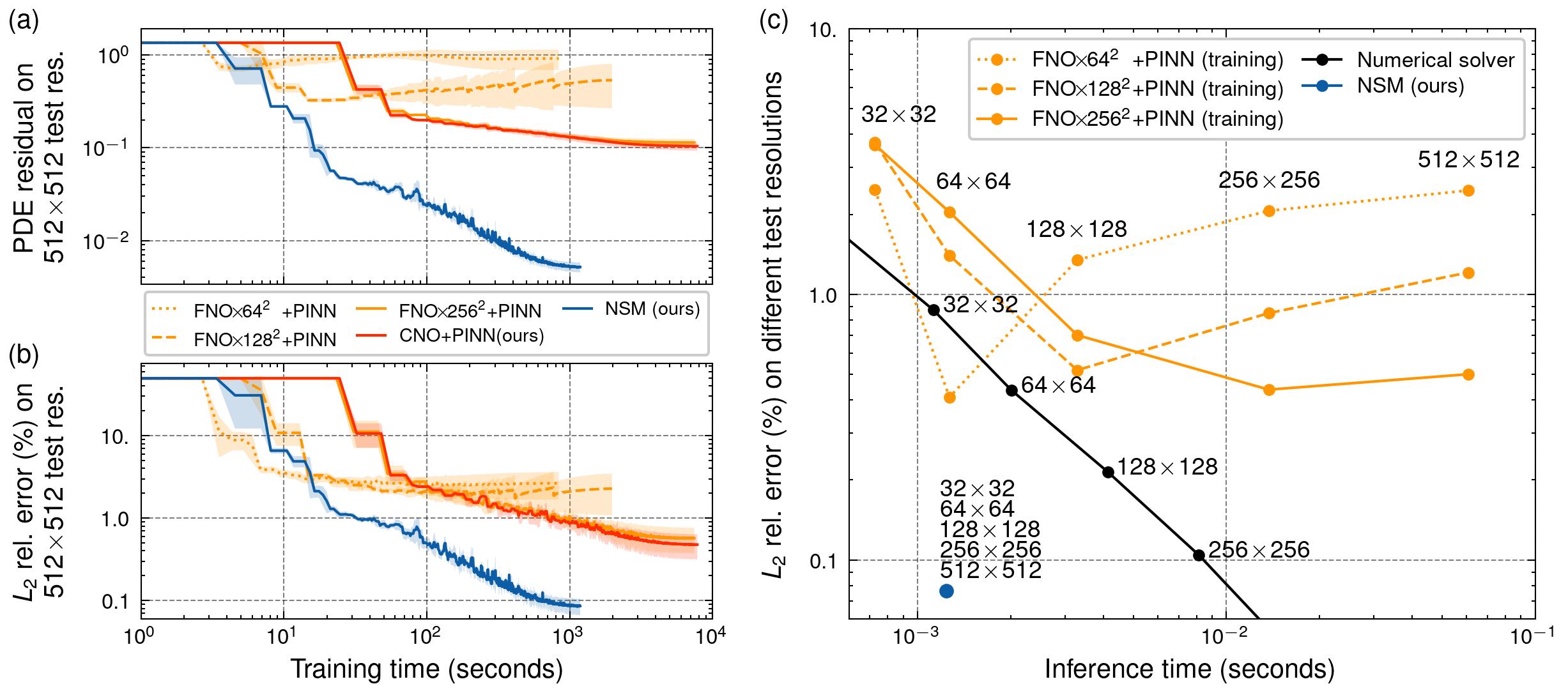}
    \captionsetup{width=\linewidth}
    \caption{\textbf{Reaction-Diffusion equation with $\nu=0.01$.}
    In \textbf{a)} and \textbf{b)}, the $L_2$ relative error and PDE residual on the test set are plotted over training each model.
    The grid-based methods (FNO trained with a PINN loss) show improved accuracy as the grid resolution increases, but are significantly slower to train. When tested on different resolutions, significant aliasing errors occur on test grid resolutions that differ from the training grid resolution.
    In contrast, NSM has a much low error and PDE residual, and achieves this lower error 100$\times$ faster than the grid-based methods. In \textbf{c)}, when compared with iterative numerical solvers on different resolutions, NSM achieves the same level of accuracy with a 10$\times$ speedup. Notably, both the accuracy and computational cost of NSM remains constant, regardless of grid resolution.
    }

    \label{fig:Reaction}

    \phantomsubcaption\label{fig:Reactio.a}
    \phantomsubcaption\label{fig:Reaction.b}
    \phantomsubcaption\label{fig:Reaction.c}

\shrink{\baselineskip}
\end{figure}

We study the 1D periodic Reaction-Diffusion system with different diffusion coefficients:
    \begin{equation}
    \begin{aligned}
        u_t - \nu u_{xx} &= \rho u (1 - u), &&x \in \mathbb T,\ t \in [0, T], \\
        u(x, 0) &= h(x), &&x \in \mathbb T,
    \end{aligned}
    \end{equation}
where $\nu$ is the diffusion coefficient, and $h(x)$ is the initial condition sampled from a random field with a length scale of $0.2$. Given $h(x)$, the model learns to predict $u(x, t)$ up to time $T = 1$. The initial condition is enforced by transforming the prediction $u(x, t)$ to $u(x, t) \cdot t + h(x)$. 
    
\begin{table}[h!]
    \centering
    \caption{$L_2$ relative error (\%) and computation cost (GFLOP) for the Reaction-Diffusion equation.
    }
    \label{tab:Reaction}
    \resizebox{\linewidth}{!}{%
    \begin{tabular}{@{}lcccccc@{}}
        \toprule
        
        & {\small SNO+Spectral} & {\small FNO$\times64^2$+PINN} & {\small FNO$\times128^2$+PINN} & {\small FNO$\times256^2$+PINN} & {\small CNO+PINN (ours)} & NSM (ours) \\
        
        \midrule
        
        $\nu=0.005$ & $4.56 \pm 0.99$ & $2.30 \pm 0.19$ & $0.94 \pm 0.11$ & $0.33 \pm 0.04$ & $0.20 \pm 0.01$ & $\bm{.075 \pm .016}$ \\
        $\nu=0.01$ & $5.41 \pm 4.43$ & $2.64 \pm 0.97$ & $2.27 \pm 1.19$ & $0.57 \pm 0.19$ & $0.48 \pm 0.16$ & $\bm{.086 \pm .019}$ \\
        $\nu=0.05$ & $87.76 \pm 52$ & $11.82 \pm 5.4$ & $3.25 \pm 1.29$ & $1.06 \pm 0.28$ & $0.78 \pm 0.01$ & $\bm{.083 \pm .006}$ \\
        $\nu=0.1$ & $152.8 \pm 58$ & $13.03 \pm 6.4$ & $4.90 \pm 2.40$ & $4.07 \pm 2.00$ & $1.28 \pm 0.42$ & $\bm{.077 \pm .005}$ \\

        \midrule
        \midrule

        Train/Test & $0.91 / 0.03$ & $7.30 / 15.8$ & $31.3 / 15.8$ & $138.5 / 15.8$ & $200.3 / 25.0$ & $15.0 / 0.32$ \\

        \bottomrule
    \end{tabular}}
\end{table}

\shrink{7pt}
\paragraph{Results.} The main results are summarized in \tref{tab:Reaction}, \new{including the results for TransformOnce~\citep{poli2022transform} in \sref{exp:Reaction-more}}. The training curves for the PDE residual and relative error on the test set for diffusion coefficient $\nu=0.01$ are shown in \fref{fig:Reaction}. %
The grid-based models using the PINN loss improve in relative error with a higher-resolution grid, but require significantly longer training time. In contrast, NSM consistently achieves high accuracy while maintaining a low computation cost. As the diffusion coefficient increases, NSM shows strong robustness and consistently achieves low solution error, while the other models all increase significantly in solution error.

We also compare the ML models to a standard numerical solver~\citep{simpson2006characterizing}, as shown in \fref{fig:Reaction.c}. During inference time, the accuracy and computational cost of NSM remains constant, regardless of the spatiotemporal resolution of the domain. NSM exhibits a 10$\times$ increase in speed when compared to a numerical solver with a comparable level of accuracy. Solution error distributions  and additional details are in~\sref{exp:Reaction-more}.

\subsection{Navier-Stokes equation}
\label{exp:navierstokes}

\begin{figure}[t]

    \centering
    \begin{minipage}{\linewidth}
        \includegraphics[height=0.51\linewidth,keepaspectratio]{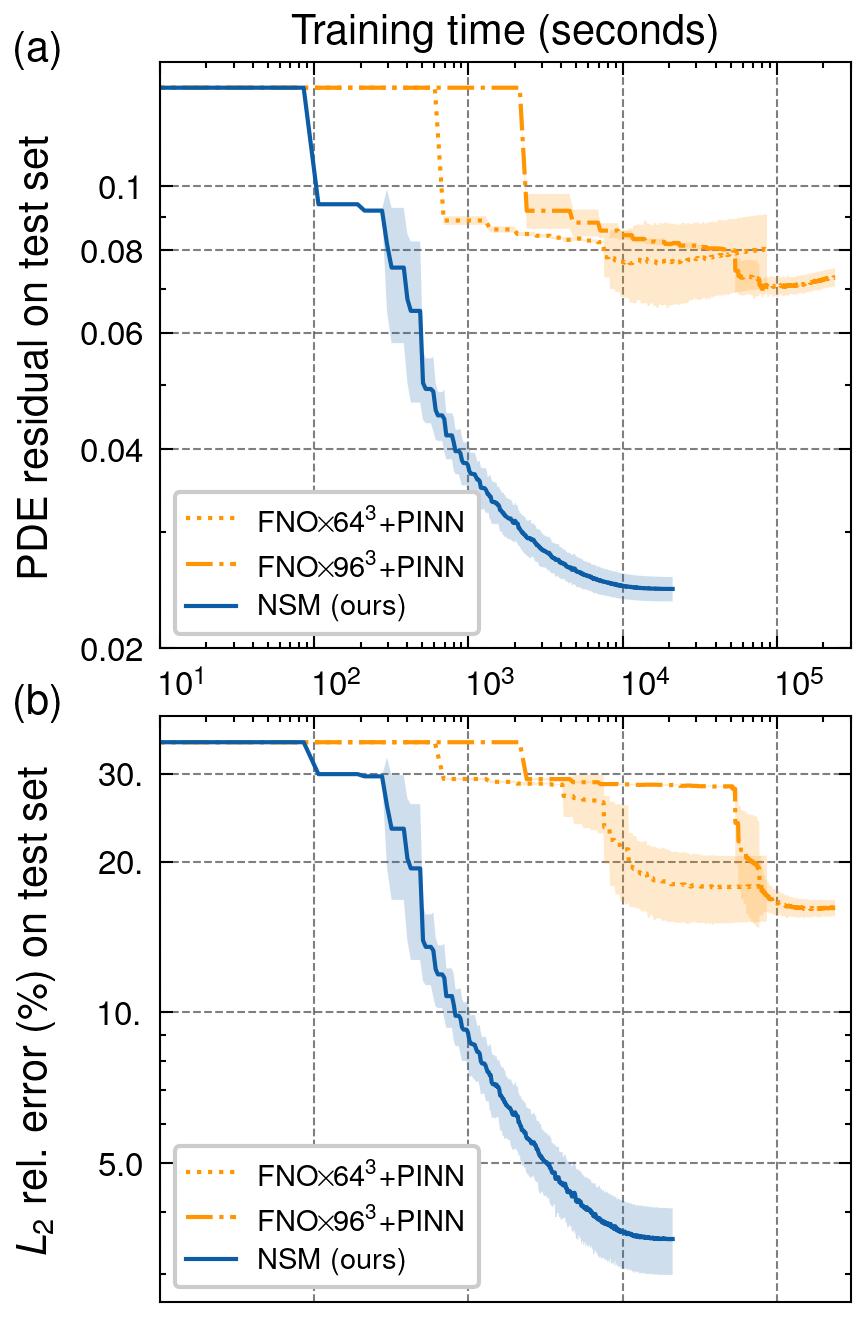}\hfill\includegraphics[height=0.51\linewidth,keepaspectratio]{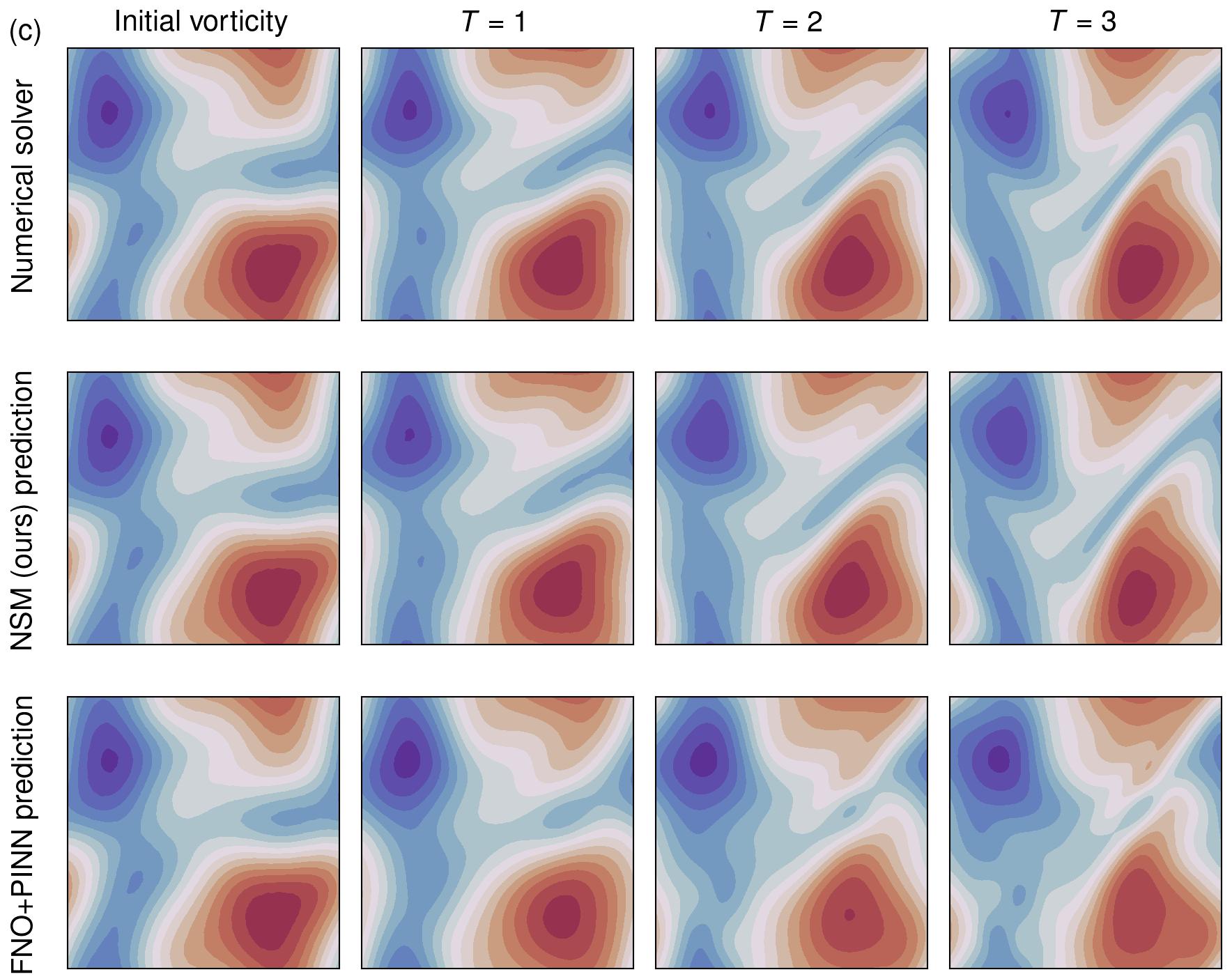}
    \end{minipage}
    \caption{\textbf{Navier-Stokes equation with $\nu = 10^{-4}$}. In \textbf{a)} and \textbf{b)}, the relative error and PDE residual on the test set are plotted. NSM achieves low $L_2$ error and PDE residual $100\times$ faster than FNO + PINN methods, and is an order of magnitude more accurate. \textbf{c)} NSM captures fine features of the vorticity evolution accurately, while the grid-based approach fails to predict the overall shape.
    }

    \label{fig:ns}

    \phantomsubcaption\label{fig:ns.a}
    \phantomsubcaption\label{fig:ns.b}
    \phantomsubcaption\label{fig:ns.c}

\vspace*{-\baselineskip}
\shrink{5pt}
\end{figure}

We study the vorticity form of the 2D periodic Navier-Stokes \new{(NS)} equations:
    \begin{equation}
    \begin{aligned}
        \partial_t w + u \cdot \nabla w &= \nu \Delta w \new{ + f}, &&x \in \mathbb T^2,\ t \in [0, T],
        \hfill
        &&\quad\text{where}
        \hfill
        &\nabla \cdot u &= 0, \\
        w(x, 0) &= w_0(x), &&x \in \mathbb T^2,
        \hfill
        &&
        \hfill
        &\nabla \times u &= w, \\
    \end{aligned}
    \end{equation}
    where $\nu$ is the viscosity, $w$ is the vorticity field, \new{and $f$ is the forcing term}. The initial vorticity $w_0$ is sampled from a random field with a length scale of $0.8$. Given $w$, the velocity field $u$ is determined by applying the inverse Laplacian operator. The model is trained to learn the evolution of vorticity.%

    \begin{wraptable}{r}{0.57\textwidth}
    \vspace*{-13.6pt}
    \centering
    \caption{$L_2$ relative error (\%) for \fix{unforced NS equations}.}
    \label{tab:ns-ff}
    \resizebox{\linewidth}{!}{%
    \begin{tabular}{@{}lccc@{}}
        \toprule
        & FNO$\times64^3$+PINN & FNO$\times96^3$+PINN & NSM (ours) \\
        \midrule
        $\nu=10^{-2}$ & $8.18 \pm 2.83$ & $7.90 \pm 0.57$ & $\bm{0.71} \pm \bm{0.02} $ \\
        $\nu=10^{-3}$ & $14.81 \pm 0.67$ & $11.99 \pm 0.86$ & $\bm{1.65} \pm \bm{0.26} $ \\
        $\nu=10^{-4}$ & $17.88 \pm 2.67$ & $16.20 \pm 0.61$ & $\bm{3.53} \pm \bm{0.53} $ \\
        \midrule\midrule
        Train & $41$h & $131$h & $10$h \\
        \bottomrule
    \end{tabular}}
    \vspace*{-\baselineskip}
    \end{wraptable}
    \fix{We first consider the unforced flow with different $\nu$ values and $T = 3s$.} The solution is diffusive for large $\nu$, and becomes more challenging to learn as the viscosity decreases, due to the sharp features in the solution.
    For FNO trained with the PINN loss, using a grid resolution larger than $96$ becomes intractable to train, due to the cost of compute and memory.
    The results are summarized in \tref{tab:ns-ff} and the case for $\nu\!=\!10^{-4}$ is shown in \fref{fig:ns}. NSM significantly outperforms grid-based FNO with the PINN loss in terms of both error and computational speed, achieving accurate results $100\times$ faster (see \fref{fig:ns.a} and \fref{fig:ns.b}).

    \begin{wraptable}{r}{0.41\textwidth}
    \iclr{\vspace*{-13.6pt}}
    \arxiv{\vspace*{-1.6pt}}
    \centering
        \caption{\new{$L_2$ relative error (\%) for the long temporal transient NS equation.}}
        \label{tab:ns-tf}
        \resizebox{\linewidth}{!}{%
        \new{%
        \begin{tabular}{@{}lcc@{}}
            \toprule
            & FNO$\times96^3$+PINN & NSM (ours) \\
            \midrule
            $\nu=1/500$ & $55.1 \pm 17.4$ & $\bm{13.2} \pm \bm{0.57}$ \\
            \bottomrule
        \end{tabular}}}
    \vspace*{-\baselineskip}
    \end{wraptable}
    \new{Next, we consider the long temporal transient flow under the forcing term $f = 0.1 (\sin(2\pi(x+y)) + \cos(2\pi(x+y)))$ and $T = 50s$, following the setting in \citet{li2020fourier, li2021physics}. This is a significantly more challenging task, as it requires propagating the initial condition over an extended time interval. Nevertheless, as summarized in \tref{tab:ns-tf}, NSM maintains high accuracy, while grid-based FNO with the PINN loss collapses during training and fails entirely. Further details for both unforced and forced flow can be found in~\sref{exp:navierstokes-more}.}

\arxiv{\section{Conclusion}}

\iclr{\paragraph{Conclusion.}} We introduce an ML approach for solving PDEs, inspired by spectral methods. By utilizing orthogonal basis functions and their spectral properties, we demonstrate numerous advantages for learning PDEs in the spectral domain. Our method is evaluated on different PDEs, and achieves significantly lower error and increased efficiency when compared to current ML methods. %

\paragraph{Acknowledgements.} This work was supported by the U.S. Department of Energy, Office of Science, Office of Advanced Scientific Computing Research, Scientific Discovery through Advanced Computing (SciDAC) program under contract No. DE-AC02-05CH11231. It was also supported in part by the Office of Naval Research (ONR) under grant N00014-23-1-2587. We also acknowledge generous support from Google Cloud and AWS Cloud Credit for Research. We thank Rasmus Malik H{\o}egh Lindrup and Sanjeev Raja for helpful discussions and feedback.

\bibliography{reference}

\begin{thebibliography}{40}
\providecommand{\natexlab}[1]{#1}
\providecommand{\url}[1]{\texttt{#1}}
\expandafter\ifx\csname urlstyle\endcsname\relax
  \providecommand{\doi}[1]{doi: #1}\else
  \providecommand{\doi}{doi: \begingroup \urlstyle{rm}\Url}\fi

\bibitem[Ainsworth \& Dong(2021)Ainsworth and Dong]{ainsworth2021galerkin}
Mark Ainsworth and Justin Dong.
\newblock Galerkin neural networks: A framework for approximating variational
  equations with error control.
\newblock \emph{SIAM Journal on Scientific Computing}, 43\penalty0
  (4):\penalty0 A2474--A2501, 2021.

\bibitem[Boyd(2001)]{boyd2001chebyshev}
John~P Boyd.
\newblock \emph{Chebyshev and Fourier spectral methods}.
\newblock Courier Corporation, 2001.

\bibitem[Bradbury et~al.(2018)Bradbury, Frostig, Hawkins, Johnson, Leary,
  Maclaurin, Necula, Paszke, Vander{P}las, Wanderman-{M}ilne, and
  Zhang]{jax2018github}
James Bradbury, Roy Frostig, Peter Hawkins, Matthew~James Johnson, Chris Leary,
  Dougal Maclaurin, George Necula, Adam Paszke, Jake Vander{P}las, Skye
  Wanderman-{M}ilne, and Qiao Zhang.
\newblock {JAX}: composable transformations of {P}ython+{N}um{P}y programs,
  2018.
\newblock URL \url{http://github.com/google/jax}.

\bibitem[Bruna et~al.(2022)Bruna, Peherstorfer, and
  Vanden-Eijnden]{bruna2022neural}
Joan Bruna, Benjamin Peherstorfer, and Eric Vanden-Eijnden.
\newblock Neural galerkin scheme with active learning for high-dimensional
  evolution equations.
\newblock \emph{arXiv preprint arXiv:2203.01360}, 2022.

\bibitem[Dresdner et~al.(2022)Dresdner, Kochkov, Norgaard,
  Zepeda-N{\'u}{\~n}ez, Smith, Brenner, and Hoyer]{dresdner2022learning}
Gideon Dresdner, Dmitrii Kochkov, Peter Norgaard, Leonardo
  Zepeda-N{\'u}{\~n}ez, Jamie~A Smith, Michael~P Brenner, and Stephan Hoyer.
\newblock Learning to correct spectral methods for simulating turbulent flows.
\newblock \emph{arXiv preprint arXiv:2207.00556}, 2022.

\bibitem[Evans(2022)]{evans2022partial}
Lawrence~C Evans.
\newblock \emph{Partial differential equations}, volume~19.
\newblock American Mathematical Society, 2022.

\bibitem[Fanaskov \& Oseledets(2022)Fanaskov and
  Oseledets]{fanaskov2022spectral}
Vladimir Fanaskov and Ivan Oseledets.
\newblock Spectral neural operators.
\newblock \emph{arXiv preprint arXiv:2205.10573}, 2022.

\bibitem[Friedman(2008)]{friedman2008partial}
Avner Friedman.
\newblock \emph{Partial differential equations of parabolic type}.
\newblock Courier Dover Publications, 2008.

\bibitem[Goswami et~al.(2022)Goswami, Bora, Yu, and
  Karniadakis]{goswami2022physics}
Somdatta Goswami, Aniruddha Bora, Yue Yu, and George~Em Karniadakis.
\newblock Physics-informed neural operators.
\newblock \emph{arXiv preprint arXiv:2207.05748}, 2022.

\bibitem[Gottlieb \& Orszag(1977)Gottlieb and Orszag]{gottlieb1977numerical}
David Gottlieb and Steven~A Orszag.
\newblock \emph{Numerical analysis of spectral methods: theory and
  applications}.
\newblock SIAM, 1977.

\bibitem[Gupta et~al.(2021)Gupta, Xiao, and Bogdan]{gupta2021multiwavelet}
Gaurav Gupta, Xiongye Xiao, and Paul Bogdan.
\newblock Multiwavelet-based operator learning for differential equations.
\newblock \emph{Advances in neural information processing systems},
  34:\penalty0 24048--24062, 2021.

\bibitem[Han et~al.(2018)Han, Jentzen, and E]{han2018solving}
Jiequn Han, Arnulf Jentzen, and Weinan E.
\newblock Solving high-dimensional partial differential equations using deep
  learning.
\newblock \emph{Proceedings of the National Academy of Sciences}, 115\penalty0
  (34):\penalty0 8505--8510, 2018.

\bibitem[He et~al.(2018)He, Li, Xu, and Zheng]{he2018relu}
Juncai He, Lin Li, Jinchao Xu, and Chunyue Zheng.
\newblock Relu deep neural networks and linear finite elements.
\newblock \emph{arXiv preprint arXiv:1807.03973}, 2018.

\bibitem[Kochkov et~al.(2021)Kochkov, Smith, Alieva, Wang, Brenner, and
  Hoyer]{kochkov2021machine}
Dmitrii Kochkov, Jamie~A Smith, Ayya Alieva, Qing Wang, Michael~P Brenner, and
  Stephan Hoyer.
\newblock Machine learning--accelerated computational fluid dynamics.
\newblock \emph{Proceedings of the National Academy of Sciences}, 118\penalty0
  (21):\penalty0 e2101784118, 2021.

\bibitem[Kovachki et~al.(2021)Kovachki, Li, Liu, Azizzadenesheli, Bhattacharya,
  Stuart, and Anandkumar]{kovachki2021neural}
Nikola Kovachki, Zongyi Li, Burigede Liu, Kamyar Azizzadenesheli, Kaushik
  Bhattacharya, Andrew Stuart, and Anima Anandkumar.
\newblock Neural operator: Learning maps between function spaces.
\newblock \emph{arXiv preprint arXiv:2108.08481}, 2021.

\bibitem[Krishnapriyan et~al.(2021)Krishnapriyan, Gholami, Zhe, Kirby, and
  Mahoney]{krishnapriyan2021characterizing}
Aditi Krishnapriyan, Amir Gholami, Shandian Zhe, Robert Kirby, and Michael~W
  Mahoney.
\newblock Characterizing possible failure modes in physics-informed neural
  networks.
\newblock \emph{Advances in Neural Information Processing Systems},
  34:\penalty0 26548--26560, 2021.

\bibitem[Kurth et~al.(2023)Kurth, Subramanian, Harrington, Pathak, Mardani,
  Hall, Miele, Kashinath, and Anandkumar]{kurth2023fourcastnet}
Thorsten Kurth, Shashank Subramanian, Peter Harrington, Jaideep Pathak, Morteza
  Mardani, David Hall, Andrea Miele, Karthik Kashinath, and Anima Anandkumar.
\newblock Fourcastnet: Accelerating global high-resolution weather forecasting
  using adaptive fourier neural operators.
\newblock In \emph{Proceedings of the Platform for Advanced Scientific
  Computing Conference}, pp.\  1--11, 2023.

\bibitem[Lange et~al.(2021)Lange, Brunton, and Kutz]{lange2021fourier}
Henning Lange, Steven~L Brunton, and J~Nathan Kutz.
\newblock From fourier to koopman: Spectral methods for long-term time series
  prediction.
\newblock \emph{The Journal of Machine Learning Research}, 22\penalty0
  (1):\penalty0 1881--1918, 2021.

\bibitem[Li et~al.(2020)Li, Kovachki, Azizzadenesheli, Liu, Bhattacharya,
  Stuart, and Anandkumar]{li2020fourier}
Zongyi Li, Nikola Kovachki, Kamyar Azizzadenesheli, Burigede Liu, Kaushik
  Bhattacharya, Andrew Stuart, and Anima Anandkumar.
\newblock Fourier neural operator for parametric partial differential
  equations.
\newblock \emph{arXiv preprint arXiv:2010.08895}, 2020.

\bibitem[Li et~al.(2021)Li, Zheng, Kovachki, Jin, Chen, Liu, Azizzadenesheli,
  and Anandkumar]{li2021physics}
Zongyi Li, Hongkai Zheng, Nikola Kovachki, David Jin, Haoxuan Chen, Burigede
  Liu, Kamyar Azizzadenesheli, and Anima Anandkumar.
\newblock Physics-informed neural operator for learning partial differential
  equations.
\newblock \emph{arXiv preprint arXiv:2111.03794}, 2021.

\bibitem[Lu et~al.(2021)Lu, Meng, Mao, and Karniadakis]{lu2021deepxde}
Lu~Lu, Xuhui Meng, Zhiping Mao, and George~Em Karniadakis.
\newblock Deepxde: A deep learning library for solving differential equations.
\newblock \emph{SIAM review}, 63\penalty0 (1):\penalty0 208--228, 2021.

\bibitem[L{\"u}tjens et~al.(2021)L{\"u}tjens, Crawford, Veillette, and
  Newman]{lutjens2021spectral}
Bj{\"o}rn L{\"u}tjens, Catherine~H Crawford, Mark Veillette, and Dava Newman.
\newblock Spectral pinns: Fast uncertainty propagation with physics-informed
  neural networks.
\newblock In \emph{The Symbiosis of Deep Learning and Differential Equations},
  2021.

\bibitem[Mason \& Handscomb(2002)Mason and Handscomb]{mason2002chebyshev}
John~C Mason and David~C Handscomb.
\newblock \emph{Chebyshev polynomials}.
\newblock CRC press, 2002.

\bibitem[Meuris et~al.(2023)Meuris, Qadeer, and Stinis]{meuris2023machine}
Brek Meuris, Saad Qadeer, and Panos Stinis.
\newblock Machine-learning-based spectral methods for partial differential
  equations.
\newblock \emph{Scientific Reports}, 13\penalty0 (1):\penalty0 1739, 2023.

\bibitem[Mitusch et~al.(2021)Mitusch, Funke, and Kuchta]{mitusch2021hybrid}
Sebastian~K Mitusch, Simon~W Funke, and Miroslav Kuchta.
\newblock Hybrid fem-nn models: Combining artificial neural networks with the
  finite element method.
\newblock \emph{Journal of Computational Physics}, 446:\penalty0 110651, 2021.

\bibitem[Peyret(2002)]{peyret2002spectral}
Roger Peyret.
\newblock \emph{Spectral methods for incompressible viscous flow}, volume 148.
\newblock Springer, 2002.

\bibitem[Poli et~al.(2022)Poli, Massaroli, Berto, Park, Dao, R{\'e}, and
  Ermon]{poli2022transform}
Michael Poli, Stefano Massaroli, Federico Berto, Jinkyoo Park, Tri Dao,
  Christopher R{\'e}, and Stefano Ermon.
\newblock Transform once: Efficient operator learning in frequency domain.
\newblock \emph{Advances in Neural Information Processing Systems},
  35:\penalty0 7947--7959, 2022.

\bibitem[Raissi et~al.(2019)Raissi, Perdikaris, and
  Karniadakis]{raissi2019physics}
Maziar Raissi, Paris Perdikaris, and George~E Karniadakis.
\newblock Physics-informed neural networks: A deep learning framework for
  solving forward and inverse problems involving nonlinear partial differential
  equations.
\newblock \emph{Journal of Computational physics}, 378:\penalty0 686--707,
  2019.

\bibitem[Rosofsky et~al.(2023)Rosofsky, Al~Majed, and
  Huerta]{rosofsky2023applications}
Shawn~G Rosofsky, Hani Al~Majed, and EA~Huerta.
\newblock Applications of physics informed neural operators.
\newblock \emph{Machine Learning: Science and Technology}, 4\penalty0
  (2):\penalty0 025022, 2023.

\bibitem[Sharma \& Shankar(2022)Sharma and Shankar]{sharma2022accelerated}
Ramansh Sharma and Varun Shankar.
\newblock Accelerated training of physics-informed neural networks (pinns)
  using meshless discretizations.
\newblock \emph{Advances in Neural Information Processing Systems},
  35:\penalty0 1034--1046, 2022.

\bibitem[Simpson \& Landman(2006)Simpson and
  Landman]{simpson2006characterizing}
Matthew~J Simpson and Kerry~A Landman.
\newblock Characterizing and minimizing the operator split error for fisher’s
  equation.
\newblock \emph{Applied mathematics letters}, 19\penalty0 (7):\penalty0
  604--612, 2006.

\bibitem[Sirignano \& Spiliopoulos(2018)Sirignano and
  Spiliopoulos]{sirignano2018dgm}
Justin Sirignano and Konstantinos Spiliopoulos.
\newblock Dgm: A deep learning algorithm for solving partial differential
  equations.
\newblock \emph{Journal of computational physics}, 375:\penalty0 1339--1364,
  2018.

\bibitem[Temam(2001)]{temam2001navier}
Roger Temam.
\newblock \emph{Navier-Stokes equations: theory and numerical analysis}, volume
  343.
\newblock American Mathematical Soc., 2001.

\bibitem[Tripura et~al.(2023)Tripura, Chakraborty, et~al.]{tripura2023physics}
Tapas Tripura, Souvik Chakraborty, et~al.
\newblock Physics informed wno.
\newblock \emph{arXiv preprint arXiv:2302.05925}, 2023.

\bibitem[Van~Kampen(1992)]{van1992stochastic}
Nicolaas~Godfried Van~Kampen.
\newblock \emph{Stochastic processes in physics and chemistry}, volume~1.
\newblock Elsevier, 1992.

\bibitem[Wang et~al.(2021{\natexlab{a}})Wang, Teng, and
  Perdikaris]{wang2021understanding}
Sifan Wang, Yujun Teng, and Paris Perdikaris.
\newblock Understanding and mitigating gradient flow pathologies in
  physics-informed neural networks.
\newblock \emph{SIAM Journal on Scientific Computing}, 43\penalty0
  (5):\penalty0 A3055--A3081, 2021{\natexlab{a}}.

\bibitem[Wang et~al.(2021{\natexlab{b}})Wang, Wang, and
  Perdikaris]{wang2021eigenvector}
Sifan Wang, Hanwen Wang, and Paris Perdikaris.
\newblock On the eigenvector bias of fourier feature networks: From regression
  to solving multi-scale pdes with physics-informed neural networks.
\newblock \emph{Computer Methods in Applied Mechanics and Engineering},
  384:\penalty0 113938, 2021{\natexlab{b}}.

\bibitem[Xia et~al.(2023)Xia, B{\"o}ttcher, and Chou]{xia2023spectrally}
Mingtao Xia, Lucas B{\"o}ttcher, and Tom Chou.
\newblock Spectrally adapted physics-informed neural networks for solving
  unbounded domain problems.
\newblock \emph{Machine Learning: Science and Technology}, 4\penalty0
  (2):\penalty0 025024, 2023.

\bibitem[Yu et~al.(2018)]{yu2018deep}
Bing Yu et~al.
\newblock The deep ritz method: a deep learning-based numerical algorithm for
  solving variational problems.
\newblock \emph{Communications in Mathematics and Statistics}, 6\penalty0
  (1):\penalty0 1--12, 2018.

\bibitem[Zhang et~al.(2022)Zhang, Zuo, Zhao, Ma, Gu, Wang, Yang, Yao, and
  Yao]{zhang2022fourier}
Kai Zhang, Yuande Zuo, Hanjun Zhao, Xiaopeng Ma, Jianwei Gu, Jian Wang, Yongfei
  Yang, Chuanjin Yao, and Jun Yao.
\newblock Fourier neural operator for solving subsurface oil/water two-phase
  flow partial differential equation.
\newblock \emph{SPE Journal}, 27\penalty0 (03):\penalty0 1815--1830, 2022.

\end{thebibliography}
\bibliographystyle{iclr2024_conference}

\newpage
\appendix

\section{Orthogonal basis}
\label{orthogonal_basis}

\begin{definition}[Orthogonal basis]
\label{def_orthogonal}

    A set of real-valued functions $\{f_m(x) \!:\! x \in \Omega \}_{m \in \mathcal I}$ is said to be orthogonal w.r.t a probability measure $\mu : \Omega \rightarrow \mathbb R^+$, if for any two indices $i, j \in \mathcal I$ we have,
    \begin{equation}
        \int_{\Omega} f_i(x) f_j(x) d\mu(x) = \mu_i \delta_{ij},
    \end{equation}
    where constants $\mu_i \in \mathbb R^+$ and $\delta_{ij} = 1_{i = j}$. If $\mu_i \equiv 1$, they are also orthonormal. Furthermore, when their finite linear combinations are dense in $\fix{\mathcal L^2}(\Omega; \mu)$, they are called an orthogonal basis.

\end{definition}

Given that a set of orthogonal basis functions can be easily normalized, we assume their orthonormality unless otherwise specified in this paper. For multi-dimensional problems, we can use the cross product of basis functions on each dimension to form another orthogonal basis:

\begin{proposition}
\label{prop:product}

    Given orthogonal basis $\{f^{(1)}_m(x_1) \!:\! x_1 \in \Omega_1 \}_{m \in \mathcal I_1}$ and $\{ f^{(2)}_m(x_2) \!:\! x_2 \in \Omega_2 \}_{m \in \mathcal I_2}$ in $\fix{\mathcal L^2}(\Omega_1; \mu_1)$ and $\fix{\mathcal L^2}(\Omega_2; \mu_2)$, the cross product $\{ f^{(1)}_{m_1}(x_1) f^{(2)}_{m_2}(x_2) \!:\! x \in \Omega_1 \times \Omega_2 \}_{m \in \mathcal I_1 \times \mathcal I_2}$ is an orthongonal basis in the product space $\fix{\mathcal L^2}(\Omega_1 \times \Omega_2; \mu_1 \times \mu_2)$.

\end{proposition}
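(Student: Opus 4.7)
The plan is to verify the two defining properties of an orthogonal basis from Definition~\ref{def_orthogonal} separately: orthogonality of the product system, and density (completeness) of its finite linear combinations in $\mathcal L^2(\Omega_1 \times \Omega_2; \mu_1 \times \mu_2)$.

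For orthogonality, I would fix indices $(m_1, m_2), (m_1', m_2') \in \mathcal I_1 \times \mathcal I_2$ and compute the inner product
\[
\int_{\Omega_1 \times \Omega_2} f^{(1)}_{m_1}(x_1) f^{(2)}_{m_2}(x_2)\, f^{(1)}_{m_1'}(x_1) f^{(2)}_{m_2'}(x_2)\, d(\mu_1 \times \mu_2).
\]
Since each basis function lies in $\mathcal L^2$ with respect to the corresponding factor measure, the integrand is $\mu_1 \times \mu_2$-integrable, so Fubini–Tonelli applies and splits the integral into the product of one-dimensional integrals. Invoking orthonormality of each basis then yields $\delta_{m_1 m_1'} \delta_{m_2 m_2'}$, which is exactly orthonormality of the product system (and in particular orthogonality).

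For density, the approach I would take is the standard tensor-product argument for Hilbert bases. First, I would recall that simple functions supported on measurable rectangles are dense in $\mathcal L^2(\Omega_1 \times \Omega_2; \mu_1 \times \mu_2)$, so it suffices to approximate any indicator $\mathbf 1_{A \times B}(x_1, x_2) = \mathbf 1_A(x_1)\mathbf 1_B(x_2)$ with $A \subseteq \Omega_1$, $B \subseteq \Omega_2$ measurable and of finite measure. By the density assumption on each one-dimensional basis, for any $\varepsilon > 0$ I can choose finite linear combinations $g_1 = \sum_m a_m f^{(1)}_m$ and $g_2 = \sum_n b_n f^{(2)}_n$ with $\|\mathbf 1_A - g_1\|_{\mathcal L^2(\mu_1)}, \|\mathbf 1_B - g_2\|_{\mathcal L^2(\mu_2)} < \varepsilon$. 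A short triangle-inequality estimate on
\[
\|\mathbf 1_A \otimes \mathbf 1_B - g_1 \otimes g_2\|_{\mathcal L^2(\mu_1 \times \mu_2)} \le \|\mathbf 1_A - g_1\|\,\|\mathbf 1_B\| + \|g_1\|\,\|\mathbf 1_B - g_2\|
\]
(where the norms on the right are taken in the respective one-dimensional $\mathcal L^2$ spaces, using Fubini for the product norm) then shows that $g_1 \otimes g_2$, which is a finite linear combination of the product basis, approximates $\mathbf 1_{A \times B}$ arbitrarily well.

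The main obstacle is the density step; orthogonality is essentially a one-line Fubini calculation. The subtlety in the density step lies in reducing from an arbitrary $\mathcal L^2$ function on the product space to indicator functions of rectangles, which in turn requires the standard measure-theoretic fact that the $\sigma$-algebra is generated by measurable rectangles and that simple functions are dense in $\mathcal L^2$. Once this reduction is made, the tensor-product estimate above completes the argument, and combining with the orthogonality established in the first step gives the claim.
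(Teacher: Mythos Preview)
The paper states Proposition~\ref{prop:product} without proof, so there is no argument to compare against; your proposal is a correct and complete proof of what the paper leaves implicit. The orthogonality step is exactly the one-line Fubini computation you describe, and your density argument via approximation of rectangle indicators by tensor products of finite linear combinations is the standard route (the paper's Definition~\ref{def_orthogonal} takes $\mu$ to be a probability measure, so the $\sigma$-finiteness needed for the reduction to simple functions on rectangles is automatic).
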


\shrink{5pt}
\subsection{Spectral correspondence}
\label{correspondence}

This section details operations in common PDE operators and their spectral correspondence with Fourier and Chebyshev\footnote{For implementation purposes, we use shifted Chebyshev polynomial on $[0, 1]$ throughout this work, i.e. we consider functions $T^*_m(x) = T_m(2x-1)$, where $T_m$ is the standard Chebyshev polynomial on $[-1, 1]$.} bases. While operations on Fourier bases are well-known, for an in-depth introduction of Chebyshev polynomials, see \citet{mason2002chebyshev}.

Let $u, v$ be the $1$-dimensional function in consideration. Denote $\tilde u, \tilde v \in \mathbb R^M$ as their $M$-term expansion under a Fourier or Chebyshev basis. We have the following properties in \tref{tab:correspondence}:

\begin{table}[htbp]
\centering
\caption{Spectral correspondence of common operations.}
\label{tab:correspondence}
\resizebox{0.85\linewidth}{!}{%
\begin{tabular}{@{}lcc@{}}
    & Fourier basis & Chebyshev polynomials \\

    \toprule

    $f$ & $\begin{aligned}f_{m,\cos} = \cos(2\pi m x) \\ f_{m,\sin} = \sin(2\pi m x) \end{aligned}$ & $f_m(x) = \cos(m \cos^{-1}(x))$ \\
    \midrule
    \midrule

    $f = u\!+\!v$ & $\tilde f = \tilde u + \tilde v$ & $\tilde f = \tilde u + \tilde v$ \\

    \midrule

    $f = u \cdot v$ & $\setlength{\jot}{-2ex}\begin{aligned}\tilde f_{m,\cos} = \sum_{n \leq m,M} &\tilde u_{n,\cos} \tilde v_{m-n,\cos} - \\ &\tilde u_{n,\sin} \tilde v_{m-n,\sin} \\ \\ \\ \\ \\ \\ \\ \\ \tilde f_{m,\sin} = \sum_{n \leq m,M} &\tilde u_{n,\cos} \tilde v_{m-n,\sin} + \\ &\tilde u_{n,\sin} \tilde v_{m-n,\cos}\end{aligned}$ & $\begin{aligned}\tilde f_m = &\sum_{n \leq m,M} \tilde u_n \tilde v_{m-n} + \\ &\sum_{n \leq M-m} \tilde u_m \tilde v_{m+n} + \tilde u_{m+n} \tilde v_m \end{aligned}$ \\
    
    \midrule
    
    $f = \frac {du} {dx}$ & $\begin{aligned}\tilde f_{m,\cos} &= 2\pi m \tilde u_{m,\sin} \\ \tilde f_{m,\sin} &= -2\pi m \tilde u_{m,\cos}\end{aligned}$ & $\begin{aligned}\tilde f_m = \sum_{2n \leq M-m} 2(m\!+\!2n\!+\!1) \tilde u_{m + 2n + 1}\end{aligned}$ \\

    \bottomrule
\end{tabular}}
\arxiv{\vspace{\baselineskip}}
\end{table}

Note that sums in convolution form $\sum_{l} a_l b_{k-l}$ can be accelerated by Fast Fourier Transforms (FFT). For multi-dimensional problems, the above operations can be done on each dimension sequentially. %

\new{%
\shrink{15pt}
\section{Grid-based methods}

\new{%
\shrink{5pt}
\label{complexity}

\paragraph{Complexity of computing residuals.} We continue the discussion in our introduction and \citet{li2021physics}, where we look at the complexity of computing exact derivatives for grid-based neural operators. Consider differentiating a parameterized kernel $\mathcal K$: 
\begin{equation}
    v(x) = (\mathcal K u)(x) = \int_\Omega K(x, y) u(y) d\mu(y) = \sum_{m \in \mathcal I^2} \int_\Omega \tilde K_m f_{m_1}(x) f_{m_2}(y) u(y) d\mu(y).
\end{equation}
For a given set of sampled points $\{x_i\}_{i = 1}^N$, suppose both $u$ and $v$ are queried at these points. Then, regardless of how the integral is implemented or approximated, generally we have $\frac {\partial v(x_i)} {\partial u(x_j)} \neq 0$ for every $i, j \in [N]$. Therefore, the complexity to obtain the exact derivatives is quadratic in $N$.%
}

\subsection{Error analysis for grid-based PINN loss}
\label{example_elliptic}

Grid-based methods estimate residuals on a discretized domain. As grid spacings (e.g., $\Delta x$, $\Delta t$) approach zero, the estimated residuals tend towards being exact. However, the grid size is limited by the expensive backpropagation in the training time, causing discretization errors. In this section, we demonstrate this with an error analysis of the grid-based PINN loss for elliptic PDEs~\citep{evans2022partial}.

\paragraph{Problem setting.} Consider an operator $\mathcal L$ on a bounded open subset $\Omega \subset \mathbb R^d$:
\begin{equation}
    \mathcal L u = - \nabla \cdot (A \nabla u),
\end{equation}
where $A(x) \in \mathbb R^{d \times d}$ are coefficient functions. In order to guarantee the uniqueness of the solution, we assume $A$ satisfies the following uniformly elliptic condition for almost every $x$ in $\Omega$:
\begin{equation}
\label{eqn:example_uniformity}
    s ||\xi||^2_2 \leq \xi^T A(x) \xi \leq S ||\xi||_2^2 \quad\text{for all } \xi \in \mathbb R^d,
\end{equation}
where $s, S > 0$ are absolute constants. Given $\phi \in \mathcal L^2(\Omega)$, consider the Dirichlet problem $\mathcal L u_\phi = \phi$ for $u_\phi \in \mathcal H^2(\Omega)$ satisfying $u_\phi = 0$ on $\partial\Omega$. The task is to learn the transformation $\mathcal G: \phi \mapsto u_\phi$.

\paragraph{Error analysis.} Given an input $\phi$, denote the model prediction as $\bar u_\phi$. Consider minimizing the grid-based PINN loss using finite difference methods with grid spacing $h > 0$, i.e., the discrete gradient operator $\nabla_h$ is used to approximate $\nabla$ in $\mathcal L$. Suppose that the grid-based PINN loss is minimized, that is for some small $\epsilon, \delta > 0$, the probability is,
\begin{equation}
\label{eqn:example_loss}
    \mathbb P(||\nabla_h \cdot (A \nabla_h \bar u_\phi) + \phi||_{\mathcal L^2(\Omega)} \leq \epsilon) \geq 1 - \delta.
\end{equation}

\begin{proposition}
\label{prop:example}
    Under the assumptions of \eref{eqn:example_uniformity} and \eref{eqn:example_loss}, the estimation for the expected solution error holds:
    \begin{equation}
        \mathbb E[||\bar u_\phi - u_\phi||_{\mathcal L^2(\Omega)}] \geq \frac {s ||\nabla_h (u_\phi - \bar u_\phi)||^2_{\mathcal L^2(\Omega)} + s ||\nabla u_\phi||^2_{\mathcal L^2(\Omega)} - S ||\nabla_h \bar u_\phi||^2_{\mathcal L^2(\Omega)} - \epsilon} {||\nabla_h \cdot (A \nabla_h u_\phi)||_{\mathcal L^2(\Omega)} / (1 - \delta)}.
    \end{equation}
\end{proposition}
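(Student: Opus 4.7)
The strategy is a duality argument: pair the error $u_\phi - \bar u_\phi$ against the test function $\mathcal{L}_h u_\phi := -\nabla_h\!\cdot\!(A\nabla_h u_\phi)$, and then lower-bound the resulting inner product by the four summands that appear in the numerator. Concretely, I would begin from the pointwise Cauchy--Schwarz bound
\begin{equation*}
\|u_\phi - \bar u_\phi\|_{\mathcal L^2}\,\|\mathcal L_h u_\phi\|_{\mathcal L^2}\;\ge\;\langle u_\phi-\bar u_\phi,\,\mathcal L_h u_\phi\rangle,
\end{equation*}
and then apply discrete integration by parts (valid since both $u_\phi$ and $\bar u_\phi$ satisfy the homogeneous Dirichlet condition) to rewrite the right-hand inner product as the bilinear form $\langle \nabla_h(u_\phi-\bar u_\phi),\,A\nabla_h u_\phi\rangle$.

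Next I would split $\nabla_h u_\phi = \nabla_h(u_\phi-\bar u_\phi)+\nabla_h\bar u_\phi$ inside that bilinear form, producing
\begin{equation*}
\langle \nabla_h(u_\phi-\bar u_\phi),A\nabla_h(u_\phi-\bar u_\phi)\rangle \;+\; \langle \nabla_h u_\phi,A\nabla_h\bar u_\phi\rangle \;-\; \langle \nabla_h\bar u_\phi,A\nabla_h\bar u_\phi\rangle.
\end{equation*}
The uniform ellipticity assumption \eqref{eqn:example_uniformity} immediately lower-bounds the first term by $s\|\nabla_h(u_\phi-\bar u_\phi)\|^2$ and upper-bounds the third by $S\|\nabla_h\bar u_\phi\|^2$, delivering two of the four numerator terms.

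The middle term $\langle \nabla_h u_\phi,A\nabla_h\bar u_\phi\rangle$ is the bridge between the discrete and continuous worlds. A second discrete integration by parts rewrites it as $\langle u_\phi,\mathcal L_h\bar u_\phi\rangle$. On the event $E := \{\|\mathcal L_h\bar u_\phi - \phi\|_{\mathcal L^2}\le\epsilon\}$, which has probability at least $1-\delta$ by \eqref{eqn:example_loss}, I can swap $\mathcal L_h\bar u_\phi$ for $\phi$ at a cost of at most $\epsilon\|u_\phi\|_{\mathcal L^2}$, which is absorbed into the $\epsilon$ slot of the numerator (under a normalization of $u_\phi$ consistent with the statement). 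Continuous integration by parts together with the PDE $\mathcal L u_\phi = \phi$ then yields $\langle u_\phi,\phi\rangle = \langle \nabla u_\phi, A\nabla u_\phi\rangle \ge s\|\nabla u_\phi\|^2$, supplying the last summand. Assembling the four estimates gives, pointwise on $E$,
\begin{equation*}
\|u_\phi-\bar u_\phi\|\,\|\mathcal L_h u_\phi\|\;\ge\;s\|\nabla_h(u_\phi-\bar u_\phi)\|^2 + s\|\nabla u_\phi\|^2 - S\|\nabla_h\bar u_\phi\|^2 - \epsilon.
\end{equation*}
Dividing by $\|\mathcal L_h u_\phi\|$, multiplying through by $\mathbf 1_E$, and taking expectation turns $\mathbb P(E)\ge 1-\delta$ into the $1/(1-\delta)$ factor advertised in the denominator.

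\emph{Main obstacle.} The delicate step is handling the mixing of discrete and continuous gradients: the only natural bridge is the identity $\langle u_\phi,\phi\rangle = \langle \nabla u_\phi,A\nabla u_\phi\rangle$, so the proof must be routed through $\langle u_\phi,\mathcal L_h\bar u_\phi\rangle$ in order to invoke the PINN-loss bound exactly once with a clean $\epsilon$ cost. A secondary bookkeeping issue is that the numerator contains random quantities $\|\nabla_h\bar u_\phi\|^2$ and $\|\nabla_h(u_\phi-\bar u_\phi)\|^2$ while the left-hand side is an expectation; this is handled by interpreting those norms on the good event $E$ and transferring the probability cost $1-\delta$ into the denominator, which is precisely the form in which it appears in the statement.
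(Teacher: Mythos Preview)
Your proposal is essentially the paper's proof. The paper pairs $u_\phi-\bar u_\phi$ with $\nabla_h\!\cdot(A\nabla_h u_\phi)$, applies Cauchy--Schwarz, integrates by parts (twice) to reach the same three-term decomposition you write, bounds the quadratic terms via \eqref{eqn:example_uniformity}, handles the cross term $-\int u_\phi\,\nabla_h\!\cdot(A\nabla_h\bar u_\phi)$ by invoking \eqref{eqn:example_loss} and then the continuous identity $\int u_\phi\,\nabla\!\cdot(A\nabla u_\phi)\le -s\|\nabla u_\phi\|^2$, and finishes with the same high-probability-to-expectation passage (the paper calls it ``Markov's inequality''). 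Your observation that the $\epsilon$ slot tacitly absorbs a factor $\|u_\phi\|_{\mathcal L^2}$ is correct and applies equally to the paper's argument.
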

\begin{proof}
    Consider the inner product $\langle \nabla_h \cdot (A \nabla_h u_\phi), \bar u_\phi - u_\phi \rangle_\Omega$. By Cauchy-Schwartz:
    \begin{equation}
    \label{eqn:example_inner_product}
        \langle \nabla_h \cdot A (\nabla_h u_\phi), u_\phi - \bar u_\phi \rangle_\Omega \leq ||\nabla_h \cdot (A \nabla_h u_\phi)||_{\mathcal L^2(\Omega)} \cdot ||\bar u_\phi - u_\phi||_{\mathcal L^2(\Omega)}.
    \end{equation}
    We can also use integration by parts, and then this inner product equals:
    \begin{equation}
    \small
    \begin{aligned}
        & &&\int_\Omega (\bar u_\phi - u_\phi) \nabla_h \cdot (A \nabla_h u_\phi) dx, \\
        &= &&\int_\Omega \nabla_h (u_\phi - \bar u_\phi) \cdot (A \nabla_h (u_\phi - \bar u_\phi)) + (\bar u_\phi - u_\phi) \nabla_h \cdot (A \nabla_h \bar u_\phi) dx, \\
        &= &&\int_\Omega \nabla_h (u_\phi - \bar u_\phi) \cdot (A \nabla_h (u_\phi - \bar u_\phi)) dx - \int_\Omega \nabla_h \bar u_\phi \cdot (A \nabla_h \bar u_\phi) dx - \int_\Omega u_\phi \nabla_h \cdot (A \nabla_h \bar u_\phi) dx.
    \end{aligned}
    \end{equation}
    By the uniformity condition in \eref{eqn:example_uniformity}, the first two terms can be estimated by:
    \begin{equation}
    \begin{aligned}
        \int_\Omega \nabla_h (u_\phi - \bar u_\phi) \cdot (A \nabla_h (u_\phi - \bar u_\phi)) dx &\geq s ||\nabla_h (u_\phi - \bar u_\phi)||^2_{\mathcal L^2(\Omega)}, \\
        \int_\Omega \nabla_h \bar u_\phi \cdot (A \nabla_h \bar u_\phi) dx &\leq S ||\nabla_h \bar u_\phi||^2_{\mathcal L^2(\Omega)}.
    \end{aligned}
    \end{equation}
    By assumption \eref{eqn:example_loss}, the following holds with probability $1 - \delta$:
    \begin{equation}
        \int_\Omega u_\phi \nabla_h \cdot (A \nabla_h \bar u_\phi) dx \leq \int_\Omega u_\phi \nabla \cdot (A \nabla u_\phi) dx + \epsilon \leq - s ||\nabla u_\phi||^2_{\mathcal L^2(\Omega)} + \epsilon.
    \end{equation}
    Then, with high probability, the inner product in \eref{eqn:example_inner_product} can be estimated by $\mathbb P(\langle \nabla_h \cdot (A \nabla_h u_\phi), \bar u_\phi - u_\phi \rangle_\Omega \geq s ||\nabla_h (u_\phi - \bar u_\phi)||^2_{\mathcal L^2(\Omega)} + s ||\nabla u_\phi||^2_{\mathcal L^2(\Omega)} - S ||\nabla_h \bar u_\phi||^2_{\mathcal L^2(\Omega)} - \epsilon) \geq 1 - \delta$. Rearranging the terms, the desired result follows by Markov's inequality.
\end{proof}

\arxiv{\vspace*{-15pt}}
\paragraph{Discussion.} This analysis implies that when the difference between the exact derivatives $\nabla u_\phi$, and the approximated derivatives $\nabla_h u_\phi$, is sufficiently large, there exists some prediction $\bar u_\phi$, such that the solution error for the prediction that minimizes the grid-based PINN loss can also be large. The spectral loss has no such issue, as it is able to always obtain exact residuals.

}
\section{Experiment setup and details}
\label{experiment_setup}

Our code is publicly available at {\scriptsize\url{https://github.com/ASK-Berkeley/Neural-Spectral-Methods}}.
All experiments are implemented using the JAX framework \citep{jax2018github}.

\paragraph{Random fields.} We use random fields for the distribution of PDE parameters, such as initial conditions or coefficient fields.
For random fields associated with periodic boundaries, we employ a periodic kernel $k(x, x') = \exp\left(-\frac{\sin^2\left(\frac{\pi}{2}|x - x'|\right)}{l^2 / 2}\right)$ to sample values on a uniform grid and use a Fourier basis for interpolation. Otherwise, we employ a RBF kernel $k(x, x') = \exp\left(-\frac{|x - x'|^2}{2l^2}\right)$ to sample values on Chebyshev collocation points and use Chebyshev polynomials for interpolation.

\paragraph{Training.} During training, the PDE parameters are sampled online from random fields, as describe above. We use a batch size of $16$. The learning rate is initialized to $10^{-3}$, and exponentially decays to $10^{-6}$ through the whole training. Experiments are run with $4$ different random seeds, and averaged over the seeds.

\paragraph{Testing.} For each problem, the test set consists of $N = 128$ parameters sampled from the same distribution used for training (different from the training samples). The reference solutions are generated as specified in \sref{exp:poisson-more}, \sref{exp:Reaction-more}, and \sref{exp:navierstokes-more}. Both $L_2$ relative error and PDE residuals are averaged over the whole test set.

\paragraph{Profiling.} During training or inference, the wall clock time is measured by the average elapsed time during $16$ calls of just-in-time compiled gradient descent step or a forward pass. The computation cost of numerical solvers in the Reaction-Diffusion problem is measured similarly. \new{The parameter countings for each model used in each experiment are provided in \tref{tab:parameter}.}

\shrink{5pt}
\begin{table}[htbp]
\centering
\caption{\new{Model size (number of trainable parameters) in each experiment.}}
\label{tab:parameter}
\resizebox{\linewidth}{!}{%
\new{%
\begin{tabular}{@{}lccccccccc@{}}
    \toprule
    PDE & \multicolumn{2}{c}{Poisson equation} & \multicolumn{4}{c}{Reaction-Diffusion equation} & \multicolumn{2}{c}{Navier-Stokes equation} & Forced NS \\
    \midrule
    Model & SNO & FNO / NSM & SNO & CNO & FNO / NSM & T1 & FNO / NSM & T1 & FNO / NSM \\
    \midrule
    \midrule
    \# Param. & $7.5\mathsf{M}$ & $31.6\mathsf{M}$ & $33.6\mathsf{M}$ & $67.2\mathsf{M}$ & $67.2\mathsf{M}$ & $167.8\mathsf{M}$ & $236.2\mathsf{M}$ & $782.4\mathsf{M}$ & $82.7\mathsf{M}$ \\
    \bottomrule
\end{tabular}}}
\shrink{\baselineskip}
\end{table}

\paragraph{Boundary conditions.} Similar to classic spectral methods, the imposition of boundary and initial conditions is straightforward. Periodic boundaries can be automatically satisfied by using a Fourier basis, and other types of boundary conditions can be enforced by applying a mollifier to the coefficients. This procedure can be specific to the problem at hand, and we show examples in~\sref{experiments}.

\subsection{Poisson equation}
\label{exp:poisson-more}

    Besides the periodic Poisson equation, we also show results for Dirichlet boundary conditions.

    \begin{figure}[h]
        \centering
        \includegraphics[width=\linewidth]{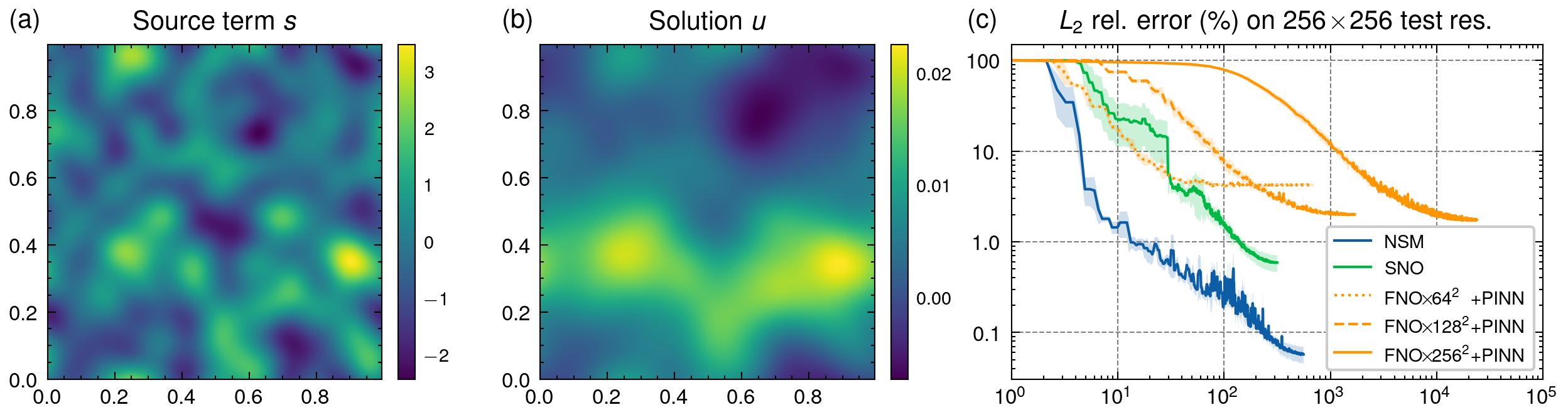}
        \caption{\textbf{(a)} Source term \textbf{(b)} Solution of periodic Poisson equation. An additional constraint $u(0, 0) = 0$ is imposed to ensure the uniqueness of the solution. \textbf{(c)} Relative error curves during training. NSM converges significantly faster and to a lower error than the other methods.}
        \label{fig:poisson}
        \shrink{\baselineskip}
    \end{figure}

    \paragraph{Parameters.} \new{For periodic boundary conditions, all models use Fourier bases in $x$ and $y$ dimensions. For Dirichlet boundary conditions, both SNO and NSM use Chebyshev polynomials in both dimensions.} For each model, we use $4$ layers with $64$ hidden dimensions. In each layer, we use $31$ modes in both dimensions. All models are trained for $30\mathsf{k}$ steps, except for FNO with a grid size of $256$, which requires $100\mathsf{k}$ steps to converge. For the PINN loss, the discrete Laplacian is computed with a $5$-point stencil, and with a periodic wrapping for the periodic boundary case. The analytic solutions are evaluated on resolution $256\times256$.

    \new{All models use $\mathsf{ReLU}$ activations, except those using a PINN loss which totally collapse during the training. Therefore, we use $\tanh$ activations for FNO+PINN and T1+PINN, and report these results.}

    \begin{table}[htbp]
    \centering
    \caption{$L_2$ relative error (\%) of the 2D Poission equation with Dirichlet boundary conditions.}
    \label{tab:dirichlet_poisson}
    \resizebox{\linewidth}{!}{%
    \begin{tabular}{@{}ccccccc@{}}
        \toprule
        SNO+Spectral & \new{T1$\times64^2$+PINN} & \new{T1+Spectral} & FNO$\times64^2$+PINN & FNO$\times128^2$+PINN & FNO$\times256^2$+PINN & NSM (ours) \\
        \midrule
        $39.9 \pm 36.6$ & \new{$2.24 \pm 1.61$} & \new{$0.407 \pm 0.068$} & $3.52 \pm 0.29$ & $3.61 \pm 0.27$ & $0.443 \pm 0.032$ & $\bm{0.280} \pm \bm{0.017}$ \\
        \bottomrule
    \end{tabular}}
    \end{table}

    \arxiv{\vspace*{-10pt}}
    \paragraph{Results.} The $L_2$ relative error for both types of boundaries are summarized in \tref{tab:periodic_poisson} and \tref{tab:dirichlet_poisson}, and the case for periodic boundary conditions is shown in \fref{fig:poisson}. For the periodic case, the inverse Laplacian operator is a convolution with $g(x) \propto ||x||_2$. Therefore, all models can exactly represent the solution with one layer. However, FNOs with a PINN loss fail to achieve an reasonable level of accuracy. As the grid size increases, error improves but it requires more training steps to converge.

    For Dirichlet boundary conditions, the solution operator is no longer diagonal under either Fourier or Chebyshev basis. Nevertheless, NSM still achieves the highest accuracy among each models.

\subsection{Reaction-diffusion equation}
\label{exp:Reaction-more}

    \paragraph{Parameters.} \new{Both CNO and NSM use Chebyshev polynomials in the $t$ dimension, and Fourier bases in the $x$ dimension. All models use $\mathsf{ReLU}$ activations.} For each model, we use $4$ layers with $64$ hidden dimensions. In each layer, we use $32$ modes in the time dimension and $64$ modes in the spatial dimension. For $\nu=0.005$ and $\nu=0.01$, all models are trained for $30\mathsf{k}$ steps. Due to the increasing problem difficulty as $\nu$ becomes larger, the training steps are increased to $100\mathsf{k}$ for $\nu=0.05$, and $200\mathsf{k}$ for $\nu=0.1$, respectively.

    The reference solution is generated by a standard operator splitting method \citep{simpson2006characterizing} with resolution $4096\times4096$, and then downsampled to resolution $512\times512$.
    \begin{figure}[h]
        \centering
        \includegraphics[width=\linewidth]{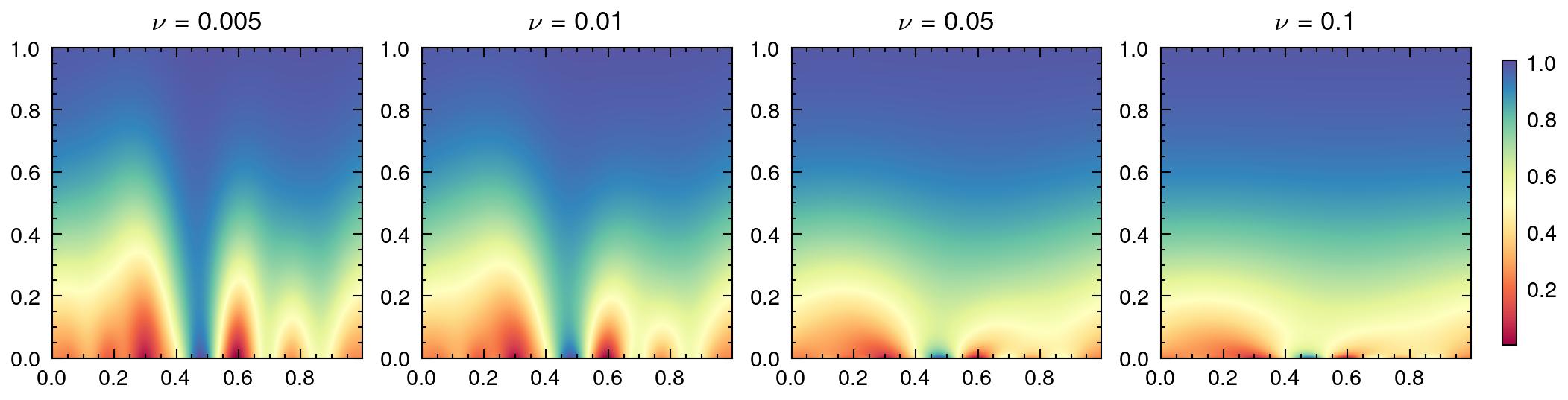}
        \caption{Solutions of the 1D Reaction-Diffusion equation with different\protect\footnotemark diffusion coefficients $\nu$. Following \citet{krishnapriyan2021characterizing}, we use a reaction coefficient $\rho = 5$ and four different values for the diffusion coefficient. As $\nu$ increases, the problem becomes progressively harder to solve.}
    \end{figure}

    \footnotetext{The domain on the $x$-axis is taken as $[0, 1)$, unlike \citet{krishnapriyan2021characterizing}, which considered the interval $[0, 2\pi)$. Diffusion coefficients are rescaled accordingly for similar solution behaviors. For diffusion coefficients larger than $0.1$, the solution becomes almost fully diffusive and constant over the domain.}

    \paragraph{Results.} The $L_2$ relative error for all models are summarized in \tref{tab:Reaction} and \tref{tab:Reaction-more}, and distributions of absolution error over the test set are shown in \fref{fig:Reaction-dist}. As $\nu$ increases, the error distribution of the grid-based methods significantly increases, while the error of NSM remains almost identical.

    \begin{table}[htbp]
        \centering
        \caption{\new{$L_2$ relative error (\%) and computation cost (GFLOP) for the Reaction-Diffusion equation, compared to T1 + Spectral loss and T1 + PINN loss (continued from \tref{tab:Reaction}).}}
        \label{tab:Reaction-more}
        \resizebox{\linewidth}{!}{%
        \new{%
        \begin{tabular}{@{}lcccccc@{}}
            \toprule
            & {\small T1+Spectral} & {\small T1$\times64^2$+PINN} & {\small T1$\times128^2$+PINN} & {\small T1$\times256^2$+PINN} & {\small CNO+PINN (ours)} & NSM (ours) \\
            \midrule
            $\nu=0.005$ & $0.68 \pm 0.02$ & $2.30 \pm 0.19$ & $0.94 \pm 0.11$ & $0.32 \pm 0.04$ & $0.20 \pm 0.01$ & $\bm{.075 \pm .016}$ \\
            $\nu=0.01$ & $0.84 \pm 0.08$ & $2.64 \pm 0.97$ & $2.27 \pm 1.19$ & $0.57 \pm 0.19$ & $0.48 \pm 0.16$ & $\bm{.086 \pm .019}$ \\
            $\nu=0.05$ & $41.36 \pm 69$ & $60.10 \pm 8.5$ & $81.87 \pm 3.1$ & $90.86 \pm 6.8$ & $0.78 \pm 0.01$ & $\bm{.083 \pm .006}$ \\
            $\nu=0.1$ & $196.5 \pm 0.6$ & $94.91 \pm 23.7$ & $101.7 \pm 2.5$ & $94.89 \pm 5.4$ & $1.28 \pm 0.42$ & $\bm{.077 \pm .005}$ \\
            \midrule
            \midrule
            Train/Test & $4.08 / 0.17$ & $6.64 / 6.35$ & $16.5 / 6.35$ & $60.1 / 6.35$ & $200.3 / 25.0$ & $15.0 / 0.32$ \\
            \bottomrule
        \end{tabular}}}
    \end{table}

    \new{Both SNO and T1 perform non-linear activations directly on spectral coefficients, a design choice that differs from FNO's approach of activating function values. This distinction leads to poor robustness in SNO and T1 for increasing $\nu$ values. In contrast, NSM shows a notable robustness to the increasing diffusivity, validating our decision to apply activations at the collocation points.}

    \begin{figure}[h]
        \centering
        \includegraphics[width=\linewidth]{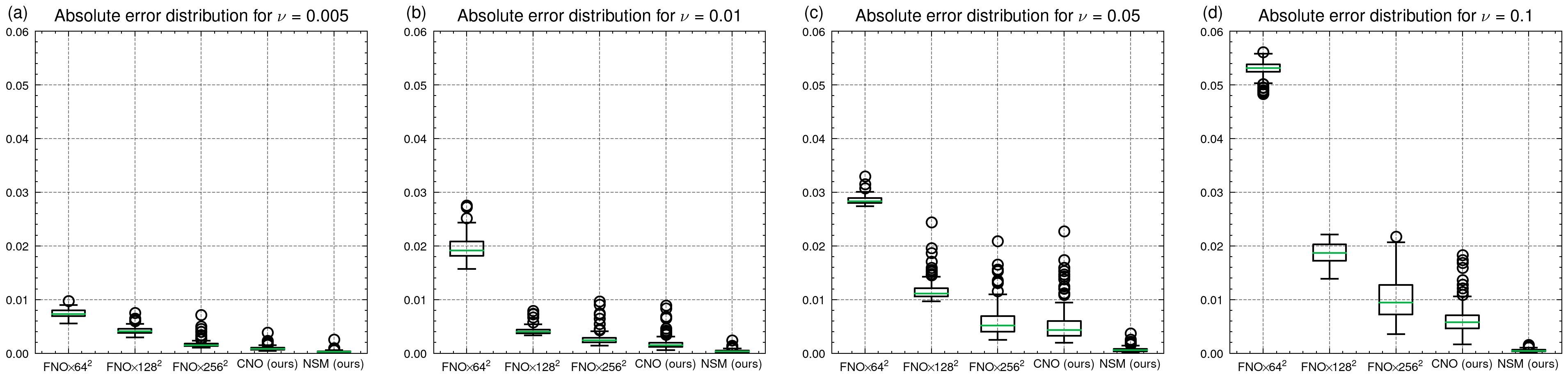}
        \caption{Reaction-Diffusion equation: Error distribution on the test set for each value of $\nu$.}
        \label{fig:Reaction-dist}
    \end{figure}

\new{%
\subsubsection{Ablation study: Fixed-grid training}
\label{ablation_fixed}

An insightful observation in our study is the connection between our spectral training and the practice of training grid-based FNOs on fixed-resolution grids, followed by interpolation for inference on different grid sizes. In this section, we provide an ablation study on this fixed-grid training strategy.

\paragraph{Problem setup.} We use exactly the same \fix{PDE problem settings} as in \sref{exp:Reaction}. The ablation model is trained on its collocation points, i.e. a fixed-resolution grid with $32 \times 64$ nodes. During inference time, the model prediction is interpolated using the Fourier basis, and queried at the test resolution.

\begin{wraptable}{r}{\iclr{0.52}\arxiv{0.45}\textwidth}
\vspace*{-12.7pt}
\hfill
\centering
\caption{\new{$L_2$ relative error (\%) for fixed-grid training.}}
\label{tab:rd_fixed_grid}
\resizebox{\linewidth}{!}{%
\begin{tabular}{@{}cccc@{}}
    \toprule
    $\nu=0.005$ & $\nu=0.01$ & $\nu=0.05$ & $\nu=0.1$ \\
    \midrule
    $55.7 \pm 0.01$ & $76.9 \pm 0.02$ & $79.5 \pm 0.04$ & $80.4 \pm 0.23$ \\
    \bottomrule
\end{tabular}}
\vspace*{-\baselineskip}
\end{wraptable}

\paragraph{Results.} The $L_2$ relative error for this ablation model is summarized in \tref{tab:rd_fixed_grid}. FNO trained with a fixed-resolution grid fails to converge, showing the important distinction between our spectral training and the practice of fixing training resolution and interpolating.

}

\subsection{Navier-Stokes equation}
\label{exp:navierstokes-more}

    \begin{figure}[h!]
        \centering
        \includegraphics[width=\linewidth]{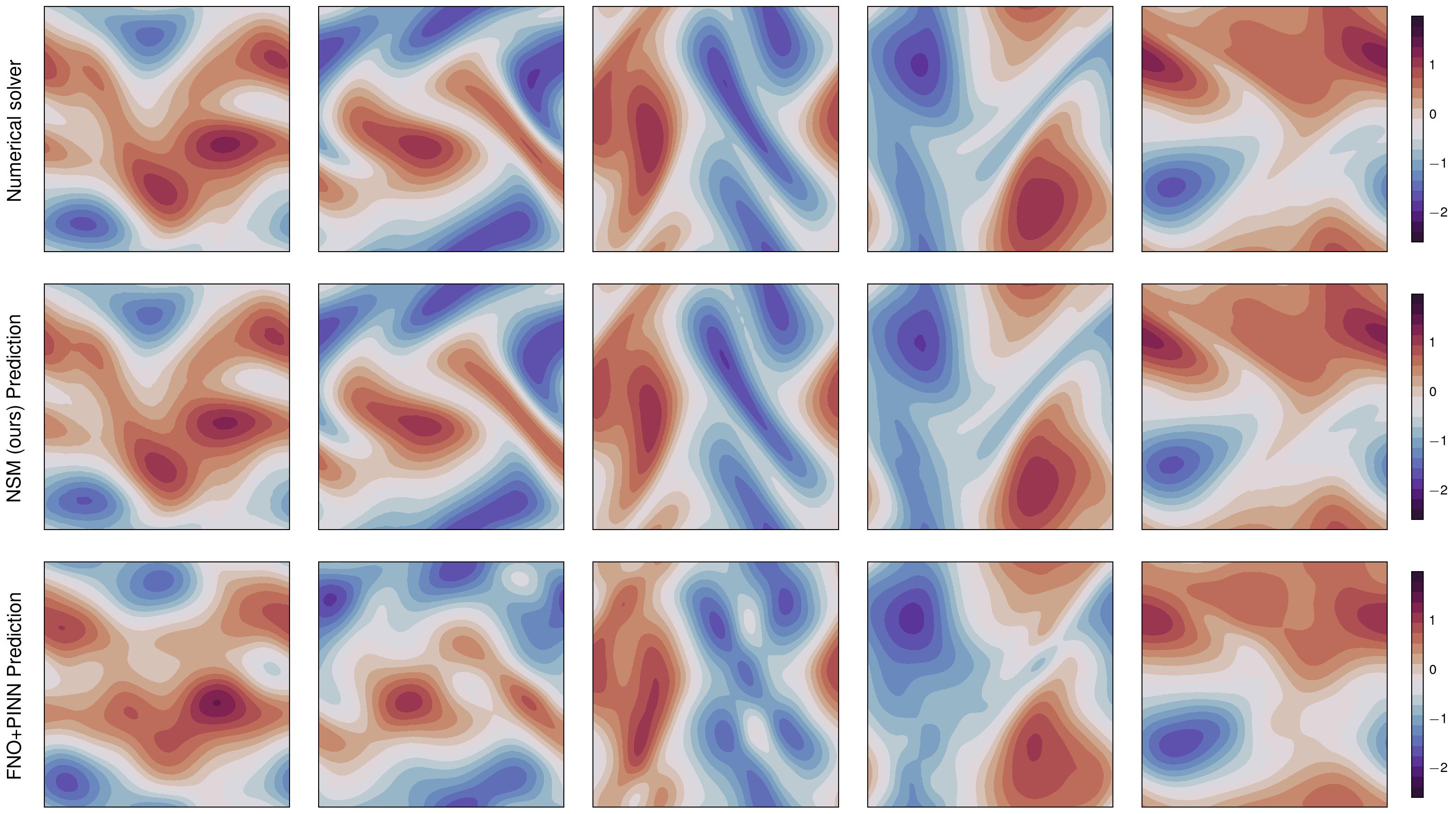}
        \caption{Visualization of unforced flow at time $T=3$ for viscosity $\nu = 10^{-4}$, using various initial vorticity samples.}
        \label{fig:navierstokes-more}
    \end{figure}

    \paragraph{\fix{Parameters for the unforced flow.}} For each model, we use $10$ layers with $32$ hidden dimensions. In each layer, we use $12$ modes in the time dimension and $31$ modes in both spatial dimensions.

    \arxiv{\vspace*{-5pt}}
    \new{\paragraph{Parameters for the forced flow.} For each model, we use $5$ layers with $32$ hidden dimensions. In each layer, we use $18$ modes in the time dimension and $31$ modes in both spatial dimensions.}

    \new{The reference solution is generated numerically using a time-stepping scheme similar to \citet{li2020fourier}, with a spatial resolution of $256\times256$. The time stepping interval $\Delta t$ is set to $10^{-3}$ for the unforced flow, and $5 \times 10^{-3}$ for the forced flow. Solution is recorded at a temporal resolution of $64$.}

    \new{NSM uses a Fourier basis in the $x$, $y$ dimensions and Chebyshev polynomials in the $t$ dimension. All models use $\mathsf{ReLU}$ activations. All models' learning rate schedulers target $200\mathsf{k}$ steps. For FNO with the PINN loss, the training is limited to $1$ day for grid size $64^3$ and $3$ days for $96^3$ respectively, after the loss plateaus.}

    \begin{table}[h!]
    \centering
    \begin{minipage}{0.7\linewidth}
        \caption{\new{$L_2$ relative error (\%) for unforced NS equation, compared to T1 + Spectral loss and T1 + PINN loss (continued from \tref{tab:ns-ff}).}}
        \label{tab:ns-t1}
        \resizebox{\linewidth}{!}{%
        \new{%
        \begin{tabular}{@{}lcccc@{}}
            \toprule
            & FNO$\times64^3$+PINN & T1$\times64^3$+PINN & T1+Spectral & NSM (ours) \\
            \midrule
            $\nu=10^{-4}$ & $17.88 \pm 2.67$ & $34.6 \pm 0.21$ & $9.22 \pm 0.05$ &$\bm{3.53} \pm \bm{0.53} $ \\
            \bottomrule
        \end{tabular}}}
    \end{minipage}
    \end{table}

    \arxiv{\vspace*{-5pt}}
    \paragraph{Results.} The $L_2$ relative errors for both problems are summarized in \tref{tab:ns-ff}, \tref{tab:ns-tf}\new{, and \tref{tab:ns-t1}}, and a visualization of the predicted vorticity for the unforced flow is shown in \fref{fig:navierstokes-more}. FNO with a PINN loss fails to predict both the overall shape and the fine features of the vorticity evolution.

\new{%
\section{Discussion: limitations}
\label{limitation}

NSM applies to a wide range of problems, but is not a one-size-fits-all solution.
Similar to those in numerical analysis, efficient spectral representations necessitate sufficient regularities of the solution. Therefore, predicting solutions that involve singularities requires more basis functions.
Additionally, NSM is designed for low-dimensional problems, e.g., problems described in $x, y, z$ (and $t$) coordinates.
Extensions of our spectral training strategy to high-dimensional problems are left for future research.
}

\end{document}